\theoremstyle{plain}
\newtheorem{theorem}{Theorem}[section]
\newtheorem{proposition}[theorem]{Proposition}
\theoremstyle{definition}
\theoremstyle{remark}
\newcommand\xrowht[2][0]{\addstackgap[.5\dimexpr#2\relax]{\vphantom{#1}}}
\newcommand{\method}{CLARIFY\xspace}
\definecolor{lightred}{RGB}{250, 220, 220}
\definecolor{lightblue}{RGB}{200, 240, 250}
\definecolor{codegreen}{rgb}{0,0.6,0}
\definecolor{codegray}{rgb}{0.5,0.5,0.5}
\definecolor{codepurple}{rgb}{0.58,0,0.82}
\definecolor{backcolour}{rgb}{0.95,0.95,0.92}
\definecolor{bestcolor}{RGB}{252, 229, 205}
\newcommand{\best}[1]{\cellcolor{bestcolor}{\textbf{#1}}}
\definecolor{secondcolor}{RGB}{243, 243, 243}
\newcommand{\second}[1]{\cellcolor{secondcolor}{\textbf{#1}}}
\definecolor{darkergreen}{RGB}{21, 152, 56}
\definecolor{red2}{RGB}{252, 54, 65}
\newcommand{\cmark}{\textcolor{darkergreen}{\ding{51}}}
\newcommand{\xmark}{\textcolor{red2}{\ding{55}}}
\icmltitlerunning{
\method: Contrastive Preference RL for Untangling Ambiguous Queries
}
\begin{document}

\twocolumn[
\icmltitle{
\method: Contrastive Preference Reinforcement Learning for \\ Untangling Ambiguous Queries
}

\icmlsetsymbol{equal}{*}

\begin{icmlauthorlist}
\icmlauthor{Ni Mu}{equal,thu}
\icmlauthor{Hao Hu}{equal,moonshot}
\icmlauthor{Xiao Hu}{thu}
\icmlauthor{Yiqin Yang}{iacas}
\icmlauthor{Bo Xu}{iacas}
\icmlauthor{Qing-Shan Jia}{thu}
\end{icmlauthorlist}

\icmlaffiliation{thu}{Beijing Key Laboratory of Embodied Intelligence Systems, Department of Automation, Tsinghua University, Beijing, China}
\icmlaffiliation{iacas}{The Key Laboratory of Cognition and Decision Intelligence for Complex Systems, Institute of Automation, Chinese Academy of Sciences, Beijing, China}
\icmlaffiliation{moonshot}{Moonshot AI, Beijing, China}

\icmlcorrespondingauthor{Yiqin Yang}{yiqin.yang@ia.ac.cn}
\icmlcorrespondingauthor{Qing-Shan Jia}{jiaqs@tsinghua.edu.cn}

\icmlkeywords{Preference-based reinforcement learning, contrastive learning}

\vskip 0.3in
]

\printAffiliationsAndNotice{\icmlEqualContribution} % otherwise use the standard text.

\begin{abstract}

Preference-based reinforcement learning (PbRL) bypasses explicit reward engineering by inferring reward functions from human preference comparisons, enabling better alignment with human intentions.
However, humans often struggle to label a clear preference between similar segments, reducing label efficiency and limiting PbRL's real-world applicability.
To address this, we propose an offline PbRL method: \textbf{C}ontrastive \textbf{L}e\textbf{A}rning for \textbf{R}esolv\textbf{I}ng Ambiguous \textbf{F}eedback (\method), which learns a trajectory embedding space that incorporates preference information, ensuring clearly distinguished segments are spaced apart, thus facilitating the selection of more unambiguous queries. 
Extensive experiments demonstrate that \method outperforms baselines in both non-ideal teachers and real human feedback settings. Our approach not only selects more distinguished queries but also learns meaningful trajectory embeddings.

\end{abstract}

\section{Introduction}

Reinforcement Learning (RL) has achieved remarkable success in various domains, including robotics~\cite{kalashnikov2018scalable, ju2022transferring}, 
gaming~\cite{mnih2015human, ibarz2018reward}, 
and autonomous systems~\cite{TRPO, bellemare2020autonomous}. 
However, a fundamental challenge in RL is the need for well-defined reward functions, which can be time-consuming and complex to design, especially when aligning them with human intent.
To address this challenge, several approaches have emerged to learn directly from human feedback or demonstrations, avoiding the need for explicit reward engineering. 
Preference-based Reinforcement Learning (PbRL) stands out by using human preferences between pairs of trajectory segments as the reward signal~\cite{pebble, christiano2017deep}. 
This framework, based on pairwise comparisons, is easy to implement and captures human intent effectively.

However, when trajectory segments are highly similar, it becomes difficult for humans to differentiate between them and identify subtle differences~\cite{inman2006tuning, bencze2021learning}. 
Therefore, existing PbRL methods~\cite{pebble, rune, surf} face this significant challenge of \textbf{ambiguous queries}: humans struggle to give a clear signal for highly similar segments, leading to ambiguous preferences. 
As highlighted in previous work~\cite{sepoa}, this problem of ambiguous queries not only hinders labeling efficiency but also restricts the practical application of PbRL in real-world settings.

In this paper, we focus on solving this problem in offline PbRL by selecting a larger proportion of unambiguous queries. 
We propose a novel method, \method, which uses contrastive learning to learn trajectory embeddings and incorporate preference information, as outlined in Figure \ref{fig:overall}. 
\method features two contrastive losses to learn a meaningful embedding space.
The ambiguity loss leverages the ambiguity information of queries, increasing the distance between embeddings of clearly distinguished segments while reducing the distance between ambiguous ones. 
The quadrilateral loss utilizes the preference information of queries, modeling the relationship between better and worse-performing segments as quadrilaterals.
With theoretical guarantees, the two losses allow us to select more unambiguous queries based on the embedding space, thereby improving label efficiency.

\begin{figure*}[t]
    \centering
    \includegraphics[width=0.9\linewidth]{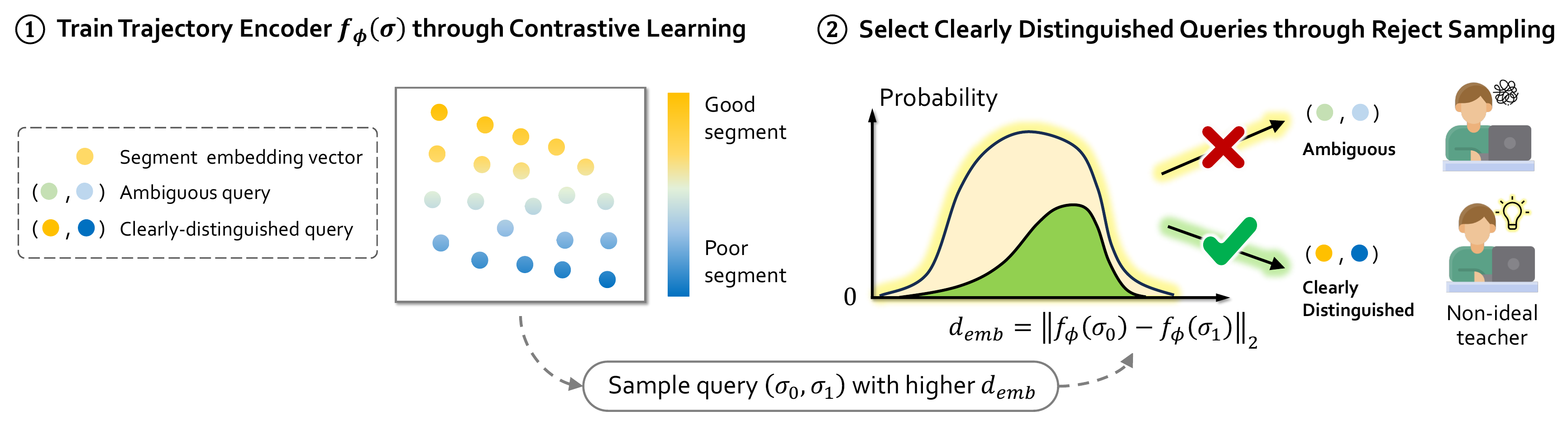}
    \vspace{-10pt}
    \caption{The framework of \method. 
    1) Train the trajectory embeddings via contrastive learning, incorporating preference information. 
    2) Using the embedding space, select clearly distinguished queries for non-ideal teachers via reject sampling. 
    }
    \label{fig:overall}
    % \vspace{-7pt}
\end{figure*}

Extensive experiments show the effectiveness \method. 
First, \method outperforms the state-of-the-art offline PbRL methods under non-ideal feedback from both scripted teachers and real human labelers. 
Second, \method significantly increases the proportion of unambiguous queries. We conduct human experiments to demonstrate that the queries selected by \method are more clearly distinguished, thereby improving human labeling efficiency.
Finally, the visualization analysis of the learned embedding space reveals that segments with clearer distinctions are widely separated while similar segments are closely clustered together. This clustering behavior confirms the meaningfulness and coherence of the embedding space.

\section{Related Work}

\textbf{Preference-based RL (PbRL). }
PbRL allows agents to learn from human preferences between pairs of trajectory segments, eliminating the need for reward engineering \cite{christiano2017deep}.
Previous PbRL works focus on improving feedback efficiency via query selection \cite{ibarz2018reward, biyik2020active}, unsupervised pretraining \cite{pebble, sepoa}, and feedback augmentation \cite{surf, LiRE}. 
Also, PbRL has been successfully applied to fine-tuning large language models (LLMs) \cite{ouyang2022training}, where human feedback serves as a more accessible alternative to reward functions.

In offline PbRL, an agent learns a policy from an offline trajectory dataset without reward signals, alongside preference feedback on segment pairs provided by a teacher.
Traditional methods \cite{OPRL} follow a two-phase process: training a reward model from preference feedback, and then performing offline RL. Some works \cite{PT, HPL} focus on preference modeling, while others \cite{IPL, OPPO} optimize policies directly from preference feedback.

\textbf{Learning from non-ideal feedback. }
Despite PbRL's potential, real-world human feedback is often non-ideal, a growing area of concern. 
To address this, \citet{Bpref} proposes a model of non-ideal feedback. 
\citet{RIME} assumes random label errors, identifying outliers via loss function values. 
\citet{xue2023diverse_preference} improves reward robustness through regularization constraints.  
However, these approaches still rely on idealized models of feedback, which diverge from real human decision-making. 
\citet{sepoa} is perhaps the closest work to ours, addressing the challenge of labeling similar segments in online PbRL, which aligns with the ambiguous query issue. 
However, the method of \citet{sepoa} cannot be applied to offline settings. This paper focuses on addressing this issue in the offline PbRL.

\textbf{Contrastive learning for RL. }
Contrastive learning is widely used in self-supervised learning to differentiate between positive and negative samples and extract meaningful features \cite{oord2018representation, simsiam}.
In RL, it has been applied to representation learning in image-based RL \cite{curl, yuan2022robust} to improve learning efficiency, and to temporal distance learning \cite{temporal_distance, episodic_novelty} to encourage exploration.
In this paper, we apply contrastive learning to incorporate preference into trajectory representations for offline PbRL.

\begin{figure*}[htbp]
    \centering
    \vspace{20pt}
    \begin{minipage}{0.19\textwidth}
        \centering
        \includegraphics[height=90pt]{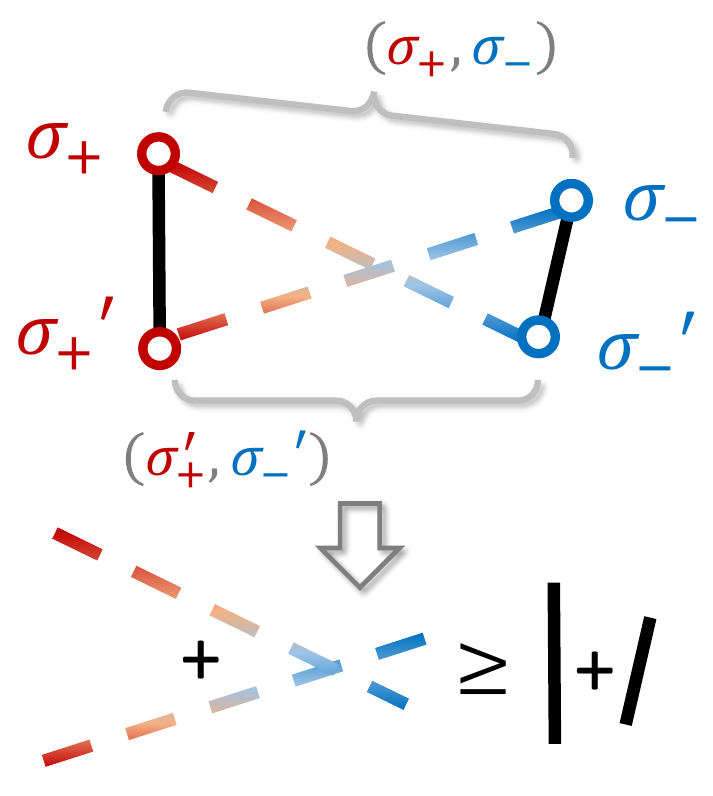}
        \subcaption{}
        \label{subfig:quad_illstration}
    \end{minipage} \hspace{0pt}
    \begin{minipage}{0.19\textwidth}
        \centering
        \includegraphics[height=90pt]{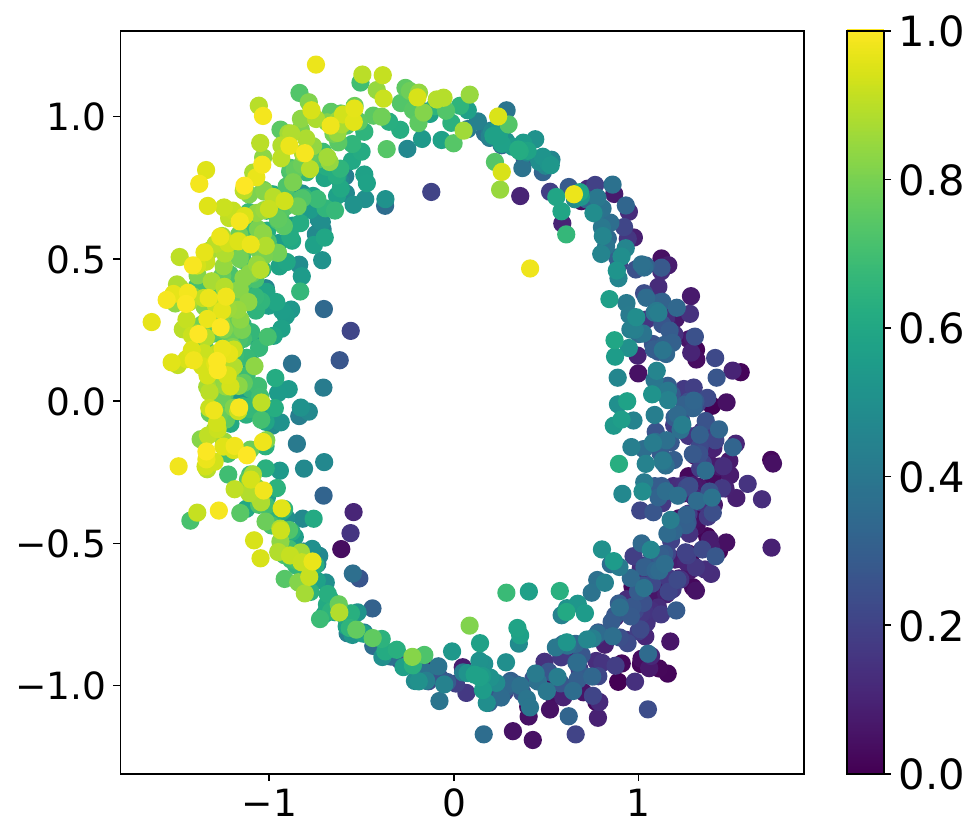}
        \subcaption{}
        \label{subfig:quad_demo_result}
    \end{minipage} \hspace{20pt}
    \begin{minipage}{0.40\textwidth}
        \centering
        \includegraphics[height=90pt]{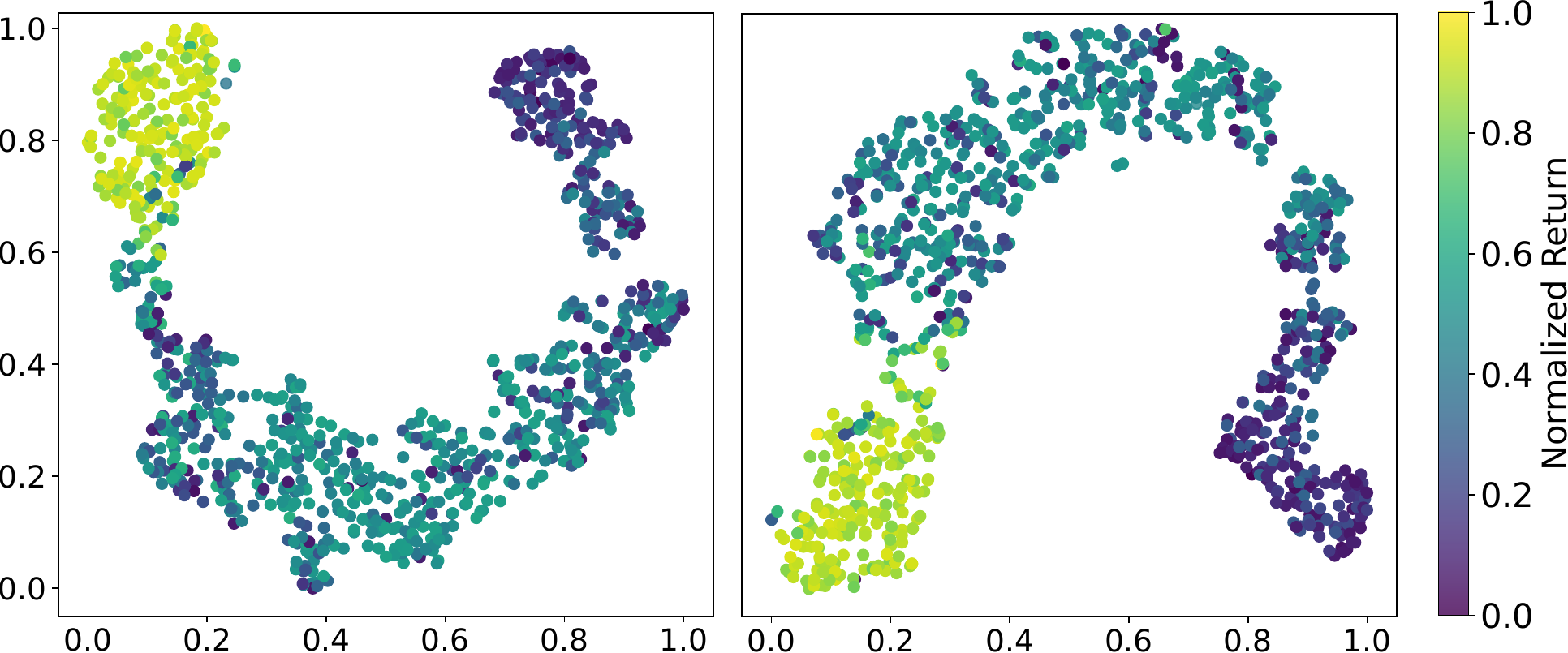}
        \subcaption{}
    \end{minipage} \hspace{30pt}
    % \vspace{-5pt}
    \caption{Illustration of quadrilateral loss $\mathcal L_\text{quad}$.
    (a) A demonstration of the main idea of $\mathcal L_\text{quad}$. 
    (b) An embedding visualization of the intuitive example in Section \ref{subsec:contrastive_emb}.
    (c) True embedding visualization results of the benchmark experiments on the ``drawer-open'' task, under skip rate $\epsilon=0.5$ and $\epsilon=0.7$. 
    }
    \label{fig:quad_loss_fig}
    % \vspace{-5pt}
\end{figure*}

\section{Preliminaries}\label{sec:preliminaries}

\textbf{Preference-based RL. }
In RL, an agent aims to maximize the cumulative discounted rewards in a Markov Decision Process (MDP), defined by a tuple $(S, A, P, r, \gamma)$. $S$ and $A$ are the state and action space, $P=P(\cdot|s,a)$ is the environment transition dynamics, $r=r(s,a)$ is the reward function, and $\gamma$ is the discount factor. 
In offline PbRL, the true reward function $r$ is unknown, and we have an offline dataset $D$ without reward signals. We request preference feedback $p$ for two trajectory segments $\sigma_0$ and $\sigma_1$ of length $H$ sampled from $D$.
$p=0$ denotes $\sigma_{1}$ is preferred ($\sigma_1 \succ \sigma_0$), $p=1$ denotes the opposite ($\sigma_1 \prec \sigma_0$).
In cases where segments are too similar to distinguish, we set $p=\texttt{no\_cop}$, and this label is skipped during learning.
The preference data $(\sigma_0, \sigma_1, p)$ forms the preference dataset $D_p$.

Conventional offline PbRL methods estimate a reward function $\hat{r} = \hat{r}_\psi(s, a)$ from $D_p$ and use it to train a policy $\pi_\theta$ via offline RL. The Preference model, typically based on the Bradley-Terry model~\cite{bradley-terry}, estimates the probability $P_\psi[\sigma_1 \succ \sigma_0]$ as:
\begin{equation}
\label{eq:BT_model}
    P_\psi[\sigma_1\succ\sigma_0] =
    \frac{\exp\sum_t\hat{r}_\psi({s}_t^1,{a}_t^1)}{\sum_{i\in\{0,1\}}\exp\sum_t\hat{r}_\psi({s}_t^i,{a}_t^i)}.
\end{equation}

The following cross-entropy loss is minimized:

\begin{equation}
\label{eq:CE_loss}
    \begin{aligned}
        \min_\psi \mathcal{L}_{\mathrm{reward}} =
        -&\underset{(\sigma_0,\sigma_1,p) \sim D_p}{\mathbb{E}}
        \Big[ (1-p)\log P_\psi[\sigma_0\succ\sigma_1]          \\
              & + p\log P_\psi[\sigma_1\succ\sigma_0] \Big].
    \end{aligned}
\end{equation}

\textbf{Contrastive learning. }
Contrastive learning is a self-supervised learning approach, which learns meaningful representations by comparing pairs of samples. 
The basic idea is to bring similar (positive) samples closer in the embedding space while pushing dissimilar (negative) samples apart. 
Early works \cite{chopra2005learning} directly optimize this objective, using the following loss function:
\begin{equation}
\begin{aligned}
\mathcal{L}_{\text{contrastive}} = & \frac{1}{2N} \sum_{i=1}^N 
\Big[ \mathbb{I}_{(y_i=y_j)} \| z_i - z_j \|_2^2 
\\
& + \mathbb{I}_{(y_i\neq y_j)} \max(0, m - \| z_i - z_j \|_2)^2\Big],
\end{aligned}
\end{equation}
where $z_i$ is the representation of sample $i$, $\mathbb{I}_{y_i=y_j}$ and $\mathbb{I}_{(y_i\neq y_j)}$ are indicator functions, and $m$ is a pre-defined margin.
InfoNCE loss \cite{oord2018representation} is widely used in recent works, which is formulated as:
\begin{equation}
\mathcal{L}_{\text{InfoNCE}} = - \log \frac{\exp(\text{sim}(z_i, z_i') / t)}{\sum_{k=1}^K \exp(\text{sim}(z_i, z_k) / t)},
\end{equation}
where $\text{sim}(\cdot,\cdot)$ is the similarity function, $z_i'$ denotes the positive sample of $z_i$, and $t$ is a temperature scaling parameter. 
In our work, we apply contrastive learning to model trajectory representations, incorporating preference information. The encoder $z = f_\phi(\tau)$ models arbitrary-length trajectory $\tau$, and its corresponding preference-based loss functions are detailed in Section \ref{subsec:contrastive_emb}.

\section{Method}
\label{sec:method}

In this section, we start by introducing the issue of \textbf{ambiguous queries}: humans struggle to clearly distinguish between similar trajectory segments, making the query ambiguous. 
This issue, validated in human experiments \cite{sepoa}, hinders the practical application of PbRL.
While previous work \cite{sepoa} tackles this issue in online PbRL settings, it remains unsolved in offline settings.

To tackle ambiguous queries in offline PbRL, we propose a novel method, \textbf{C}ontrastive \textbf{L}e\textbf{A}rning for \textbf{R}esolv\textbf{I}ng Ambiguous \textbf{F}eedback (\method), which maximizes the selection of clearly-distinguished queries to improve human labeling efficiency.
Our approach is based on contrastive learning, integrating preference information into trajectory embeddings. 
In the learned embedding space, clearly distinguished segments are well-separated, while ambiguous segments remain close, as detailed in Section \ref{subsec:contrastive_emb}.
Based on this embedding, we introduce a query selection method in Section \ref{subsec:query_selection} to select more unambiguous queries. 
The overall framework of \method is illustrated in Figure~\ref{fig:overall} and Algorithm~\ref{alg:overall}, with implementation details provided in Section \ref{subsec:impl_details}.

\subsection{Representation Learning}
\label{subsec:contrastive_emb}

In this subsection, we first formalize the problem of preference-based representation learning. 
Our goal is to train an encoder $z = f_\phi(\tau)$, where $z$ is a fixed-dimensional embedding of trajectory $\tau$. 
We leverage a preference dataset $D_p$ containing tuples $(\sigma_0, \sigma_1, p)$, where $\sigma_0$ and $\sigma_1$ are trajectory segments, and $p \in \{0, 1, \texttt{no\_cop}\}$ indicates the preference: $p = 0$ if $\sigma_0$ is preferred, $p = 1$ if $\sigma_1$ is preferred, and $p = \texttt{no\_cop}$ if the segments are too similar to compare.

A well-structured embedding space should maximize the distance between clearly distinguished segments while minimizing the distance between ambiguous ones. 
In such a space, trajectories with similar performance are close, while those with large performance gaps are far apart. 
This results in a meaningful and coherent embedding space, where high-performance trajectories form one cluster, low-performance trajectories form another, and intermediate trajectories smoothly transition between them. 
To achieve this, we propose two contrastive learning losses.

\textbf{Ambiguity loss $\mathcal{L}_\text{amb}$. }
The first loss, called the ambiguity loss, directly optimizes this goal.
It maximizes the distance between the embeddings of clearly distinguished segment pairs and minimizes the distance between ambiguous ones: 
\begin{equation}
\begin{aligned}
\min_\phi \mathcal{L}_\text{amb} = \Big[
& - \underset{\substack{(\sigma_0,\sigma_1,p)\sim D_p, \\ p \in \{0,1\}}}{\mathbb{E}} \ell(z_0, z_1)
\\
& \quad\quad + \underset{\substack{(\sigma_0,\sigma_1,p)\sim D_p, \\ p = \texttt{no\_cop}}}{\mathbb{E}} \ell(z_0, z_1)
\Big],
\label{eq:loss_dist}
\end{aligned}
\end{equation}
where $z_0 = f_\phi(\sigma_0)$, $z_1 = f_\phi(\sigma_1)$, and $\ell(\cdot, \cdot)$ is the distance metric.

However, relying solely on the ambiguity loss $\mathcal{L}_\text{amb}$ can cause several issues. 
First, when the preference dataset is small, continuous optimization of $\mathcal{L}_\text{amb}$ may lead to overfitting. 
Additionally, $\mathcal{L}_\text{amb}$ alone can cause representation collapse, where ambiguous segments are mapped to the same point in the embedding space.
This is because $\mathcal{L}_\text{amb}$ only leverages ambiguity information (whether segments are distinguishable) but ignores preference relations (which segment is better).
The quadrilateral loss $\mathcal{L}_\text{quad}$, introduced below, mitigates this issue by acting as a regularizer, ensuring the embedding space better captures the full preference structure and improving representation quality.

\textbf{Quadrilateral loss $\mathcal{L}_\text{quad}$. }
To address these issues, we introduce the quadrilateral loss $\mathcal{L}_\text{quad}$, which directly models preference relations between segment pairs, better capturing the underlying structure in the embedding space. 
Specifically, for two clearly distinguished queries $(\sigma_+, \sigma_-)$ and $(\sigma_+', \sigma_-')$, where $\sigma_+$ and $\sigma_+'$ are preferred over $\sigma_-$ and $\sigma_-'$, we create a quadrilateral-shaped relationship between their embeddings.  
By utilizing pairs of queries, the training data size grows from $O(n)$ to $O(n^2)$, mitigating the overfitting issue caused by limited preference data.

The core idea $\mathcal{L}_\text{quad}$ is illustrated in Figure~\ref{fig:quad_loss_fig}(\subref{subfig:quad_illstration}). 
For each pair of clearly distinguished queries, we treat $\sigma_+$ and $\sigma_+'$ as ``positive" samples, as they are preferred over $\sigma_-$ and $\sigma_-'$, the ``negative" samples. 
The quadrilateral loss encourages 
the sum of distances between positive samples $\sigma_+, \sigma_+'$ and between negative samples $\sigma_-, \sigma_-'$ to be smaller than 
the sum of distances within positive or negative pairs (i.e., between $\sigma_+$ and $\sigma_-$, or $\sigma_+'$ and $\sigma_-$).  
Formally, this is expressed as:
\begin{equation}
\begin{aligned}
\min_{\phi} \mathcal{L}_\text{quad} = & 
- \underset{
((\sigma_+, \sigma_-), (\sigma_+', \sigma_-')) \sim D_p}{\mathbb{E}} \Big[
\ell(z^+, {z^-}^\prime) \\
& ~~ + \ell({z^+}^\prime, z^-) - \ell(z^+, {z^+}^\prime) - \ell(z^-, {z^-}^\prime)
\Big], 
\label{eq:loss_quad}
\end{aligned}
\end{equation}
where $z^+=f_\phi(\sigma_+), {z^+}^\prime=f_\phi(\sigma_+'), z^-=f_\phi(\sigma_-), {z^-}^\prime=f_\phi(\sigma_-')$.

\textbf{An intuitive example. }
To demonstrate the effectiveness of the quadrilateral loss, we conducted a simple experiment. 
We generated $1000$ data points with values uniformly distributed between $0$ and $1$, initializing each data point’s embedding as a $2$-dim vector sampled from a standard normal distribution. 
We then optimized these embeddings using the quadrilateral loss. 
As shown in Figure \ref{fig:quad_loss_fig}(\subref{subfig:quad_demo_result}), the learned embedding space exhibits a smooth and meaningful distribution, with higher-valued data points transitioning gradually to lower-valued ones.
Please refer to Appendix~\ref{app:intuitive_detail} for experimental details.

\begin{table*}[ht]
\centering
\small
\caption{Success rates on Metaworld tasks (the first $7$ tasks) and episodic returns on DMControl tasks (the last $2$ tasks), over $6$ random seeds. 
We use skip rate $\epsilon = 0.5, 0.7$ and report the average performance and standard deviation of the last $5$ trained policies. 
The yellow and gray shading represent the best and second-best performances, respectively. }
\vspace{-7pt}
\label{tab:main_results}
\begin{adjustbox}{max width=\textwidth}
\begin{tabular}{c|l|rrrrrrrrr}
\toprule
\multirow{2}{*}{\textbf{\makecell[c]{Skip \\ Rate}}} & 
\multirow{2}{*}{\textbf{Algorithm}} & 
\multirow{2}{*}{\textbf{box-close}} & 
\multirow{2}{*}{\textbf{dial-turn}} & 
\multirow{2}{*}{\textbf{drawer-open}} & 
\multirow{2}{*}{\textbf{\makecell[r]{handle \\ -pull-side}}} & 
\multirow{2}{*}{\textbf{hammer}} & 
\multirow{2}{*}{\textbf{\makecell[r]{peg-insert \\ -side}}} & 
\multirow{2}{*}{\textbf{sweep-into}} & 
\multirow{2}{*}{\textbf{cheetah-run}} & 
\multirow{2}{*}{\textbf{walker-walk}} \\
& & & & & & & & \\
\midrule
\midrule
- 
& IQL & 94.90 {\tiny $\pm$ 1.42} & 76.50 {\tiny $\pm$ 1.73} & 98.60 {\tiny $\pm$ 0.45} & 99.30 {\tiny $\pm$ 0.59} & 72.00 {\tiny $\pm$ 2.35} & 88.40 {\tiny $\pm$ 1.30} & 79.40 {\tiny $\pm$ 1.91} & 607.46 {\tiny $\pm$ 8.13} & 830.15 {\tiny $\pm$ 20.08} \\

\cmidrule{1-11}

\multirow{5}{*}{$0.5$} 
& MR & 0.26 {\tiny $\pm$ 0.02} & 14.46 {\tiny $\pm$ 5.27} & 50.47 {\tiny $\pm$ 6.24} & 79.73 {\tiny $\pm$ 12.19} & 0.14 {\tiny $\pm$ 0.08} & 9.23 {\tiny $\pm$ 2.03} & 23.25 {\tiny $\pm$ 9.67} & 205.04 {\tiny $\pm$ 50.53} & 322.93 {\tiny $\pm$ 161.07} \\

& OPRL & \second{8.25} {\tiny $\pm$ 3.16} & \second{57.33} {\tiny $\pm$ 25.02} & \second{72.67} {\tiny $\pm$ 2.87} & 83.92 {\tiny $\pm$ 7.98} & 14.80 {\tiny $\pm$ 5.27} & \second{22.00} {\tiny $\pm$ 4.64} & \second{61.00} {\tiny $\pm$ 7.52} & 531.96 {\tiny $\pm$ 48.75} & 646.40 {\tiny $\pm$ 51.35} \\

& PT & 0.15 {\tiny $\pm$ 0.20} & 30.54 {\tiny $\pm$ 7.24} & 61.64 {\tiny $\pm$ 10.07} & \second{89.75} {\tiny $\pm$ 6.07} & 0.11 {\tiny $\pm$ 0.10} & 10.20 {\tiny $\pm$ 2.94} & 46.35 {\tiny $\pm$ 3.35} & 384.59 {\tiny $\pm$ 99.26} & 599.43 {\tiny $\pm$ 39.42} \\

& OPPO & 0.56 {\tiny $\pm$ 0.79} & 13.06 {\tiny $\pm$ 12.74} & 11.67 {\tiny $\pm$ 6.24} & 0.56 {\tiny $\pm$ 0.79} & 2.78 {\tiny $\pm$ 4.78} & 0.00 {\tiny $\pm$ 0.00} & 15.56 {\tiny $\pm$ 11.57} & 346.01 {\tiny $\pm$ 127.78} & 311.27 {\tiny $\pm$ 42.73} \\

& LiRE & 3.60 {\tiny $\pm$ 0.69} & 46.20 {\tiny $\pm$ 6.75} & 63.47 {\tiny $\pm$ 10.38} & 52.07 {\tiny $\pm$ 33.58} & \second{16.20} {\tiny $\pm$ 13.37} & 21.60 {\tiny $\pm$ 4.00} & 57.47 {\tiny $\pm$ 2.74} & \second{553.61} {\tiny $\pm$ 43.16} & \second{789.18} {\tiny $\pm$ 28.77} \\

\cmidrule{2-11}

& \method & \best{29.40} {\tiny $\pm$ 16.27} & \best{77.50} {\tiny $\pm$ 7.37} & \best{83.50} {\tiny $\pm$ 7.40} & \best{95.00} {\tiny $\pm$ 1.22} & \best{26.75} {\tiny $\pm$ 11.26} & \best{24.25} {\tiny $\pm$ 6.65} & \best{68.00} {\tiny $\pm$ 7.13} & \best{617.31} {\tiny $\pm$ 14.43} & \best{796.34} {\tiny $\pm$ 12.87} \\

\midrule

\multirow{5}{*}{$0.7$} 
& MR & 0.20 {\tiny $\pm$ 0.11} & 16.57 {\tiny $\pm$ 8.22} & 45.51 {\tiny $\pm$ 10.25} & 74.82 {\tiny $\pm$ 17.10} & 0.06 {\tiny $\pm$ 0.09} & 5.21 {\tiny $\pm$ 2.63} & 17.22 {\tiny $\pm$ 6.05} & 234.77 {\tiny $\pm$ 81.40} & 306.39 {\tiny $\pm$ 134.72} \\

& OPRL & 7.40 {\tiny $\pm$ 6.97} & \second{63.40} {\tiny $\pm$ 9.46} & 46.50 {\tiny $\pm$ 18.83} & \second{84.00} {\tiny $\pm$ 7.11} & 5.00 {\tiny $\pm$ 2.28} & \best{23.75} {\tiny $\pm$ 5.36} & 52.00 {\tiny $\pm$ 9.33} & 513.94 {\tiny $\pm$ 55.45} & 664.16 {\tiny $\pm$ 89.47} \\

& PT & 0.18 {\tiny $\pm$ 0.18} & 25.64 {\tiny $\pm$ 7.40} & \second{64.72} {\tiny $\pm$ 18.44} & 74.94 {\tiny $\pm$ 12.56} & 0.09 {\tiny $\pm$ 0.05} & 8.21 {\tiny $\pm$ 4.07} & 28.52 {\tiny $\pm$ 6.67} & 429.19 {\tiny $\pm$ 44.92} & 647.68 {\tiny $\pm$ 38.53} \\

& OPPO & 0.56 {\tiny $\pm$ 0.79} & 2.78 {\tiny $\pm$ 2.83} & 11.67 {\tiny $\pm$ 6.24} & 0.56 {\tiny $\pm$ 0.79} & 4.44 {\tiny $\pm$ 6.29} & 0.00 {\tiny $\pm$ 0.00} & 15.56 {\tiny $\pm$ 11.57} & 355.69 {\tiny $\pm$ 59.35} & 304.19 {\tiny $\pm$ 16.25} \\

& LiRE & \second{7.67} {\tiny $\pm$ 16.79} & 38.40 {\tiny $\pm$ 8.52} & 39.10 {\tiny $\pm$ 6.78} & 50.60 {\tiny $\pm$ 31.59} & \second{14.25} {\tiny $\pm$ 7.30} & \second{23.40} {\tiny $\pm$ 3.40} & \second{56.00} {\tiny $\pm$ 6.40} & \second{514.75} {\tiny $\pm$ 10.02} & \second{795.02} {\tiny $\pm$ 22.80} \\

\cmidrule{2-11} 

& \method & \best{17.50} {\tiny $\pm$ 6.87} & \best{79.40} {\tiny $\pm$ 3.83} & \best{78.60} {\tiny $\pm$ 10.52} & \best{95.00} {\tiny $\pm$ 1.10} & \best{28.75} {\tiny $\pm$ 15.58} & 23.00 {\tiny $\pm$ 1.22} & \best{59.67} {\tiny $\pm$ 10.09} & \best{593.24} {\tiny $\pm$ 22.75} & \best{816.54} {\tiny $\pm$ 11.08} \\

		\bottomrule
		\end{tabular}
	\end{adjustbox}
\vspace{-12pt}
\end{table*}

\subsection{Query Selection}
\label{subsec:query_selection}
This section presents a query selection method based on rejection sampling to select more unambiguous queries. 
For a given query $(\sigma_0, \sigma_1, p)$, we calculate the embedding distance $d_\text{emb}=\ell(f_\phi(\sigma_0), f_\phi(\sigma_1))$ between its two segments. For a batch of queries, we can obtain a distribution $p(d_\text{emb})$ of the embedding distances. We aim to manipulate this distribution using rejection sampling to increase the proportion of clearly distinguished queries.

We define the rejection sampling distribution $q(d_\text{emb})$ as follows.
First, we estimate the density functions $\rho_{\text{clr}}(d_\text{emb})$ and $\rho_{\text{amb}}(d_\text{emb})= 1- \rho_{\text{clr}}(d_\text{emb})$ for clearly-distinguished and ambiguous pairs, using the existing preference dataset $D_p$. 
These densities reflect the likelihood of observing a distance $d_\text{emb}$ for each type of segment pair.
Next, we compute a new density function:
\begin{equation}
\begin{aligned}
\rho_1(d_\text{emb}) & = \frac{\max\left(0, \rho_{\text{clr}}(d_\text{emb}) - \rho_{\text{amb}}(d_\text{emb})\right) }
{\int \max\left(0, \rho_{\text{clr}}(d') - \rho_{\text{amb}}(d')\right) dd'},
\\
\rho_2(d_\text{emb}) & = \frac{\rho_{\text{clr}}(d_\text{emb}) / {\rho_{\text{amb}}(d_\text{emb})}}
{\int (\rho_{\text{clr}}(d') / {\rho_{\text{amb}}(d')) dd'}},
\\
\rho(d_\text{emb}) & = 0.5(\rho_1(d_\text{emb}) + \rho_2(d_\text{emb})).
\end{aligned}
\end{equation}
This density function emphasizes the distances where clearly distinguished segments are more frequent than ambiguous ones.
Finally, we multiply this new density function by the original distribution $p(d_\text{emb})$ to obtain the rejection sampling distribution $q(d_\text{emb})$:
\begin{equation}
q(d_\text{emb}) = p(d_\text{emb}) \cdot \rho(d_\text{emb}),
\end{equation}
which increases the likelihood of selecting queries with clearly distinguished segment pairs, improving labeling efficiency.
For the remaining queries after rejection sampling, we follow prior work \cite{pebble, OPRL} by selecting those with maximum disagreement.

\subsection{Implementation Details}
\label{subsec:impl_details}

\textbf{Embedding training. }
For embedding training in Section \ref{subsec:contrastive_emb}, we adopt a Bi-directional Decision Transformer (BDT) architecture \cite{HIM}, where the encoder $z = f_\phi(\tau)$ and the decoder $\hat a = \pi_{\phi'}(s,z)$ are Transformer-based. The model is trained with a reconstruction loss:
\begin{equation}
\min_{\phi, \phi'} \mathcal{L}_\text{recon} = \mathbb{E}_{\substack{\tau \sim D \\ (s,a) \sim \tau}} \big\| \pi_{\phi'} \left(s, f_\phi(\tau)\right), a \big\|_2.
\label{eq:recon_loss}
\end{equation}
Appendix~\ref{app:HIM} provides more details on BDT.
Also, to stabilize training, we constrain the L2 norm of the embeddings to be close to $1$. 
Without this constraint, embeddings may either grow unbounded or collapse to the origin, both of which can cause training to fail:
\begin{equation}
\min_{\phi} \mathcal{L}_\text{norm} = \mathbb{E}_{\tau \sim D} \max \left(\|f_\phi(\tau)\|_2, 1 \right).
\end{equation}
The final loss function combines these terms:
\begin{equation}
\mathcal{L} = \mathcal{L}_\text{recon} + \lambda_\text{amb} \mathcal{L}_\text{amb} + \lambda_\text{quad} \mathcal{L}_\text{quad} + \lambda_\text{norm} \mathcal{L}_\text{norm},
\label{eq:total_loss}
\end{equation}
where $\lambda_\text{amb}, \lambda_\text{quad}, \lambda_\text{norm}$ are hyperparameters. We use the L2 distance as the distance metric $\ell(\cdot, \cdot)$.

\textbf{Rejection sampling. }
For rejection sampling in Section \ref{subsec:query_selection}, following the successful discretization of low-dimensional features in prior RL works~\cite{bellemare2017distributional, furuta2021policy}, we discretize the embedding distance $d_\text{emb}$ it into $n_\text{bin}$ intervals to handle continuous distributions.

Based on the above discussion, the overall process of \method is summarized in Algorithm~\ref{alg:overall}. 
First, we randomly sample a query batch to pretrain the encoder and reward model. 
Next, we select queries based on the pretrained embedding space, update the preference dataset $D_p$ and reward model, and retrain the embedding using the updated $D_p$. 
Finally, we train the policy $\pi_\theta$ using a standard offline RL algorithm, such as IQL \cite{IQL}.
Additional implementation details are provided in Appendix \ref{app:implement_detail}.

\begin{figure*}[htbp]
    \centering
    \vspace{10pt}
    \begin{minipage}{0.22\textwidth}
        \centering
        \includegraphics[height=80pt]{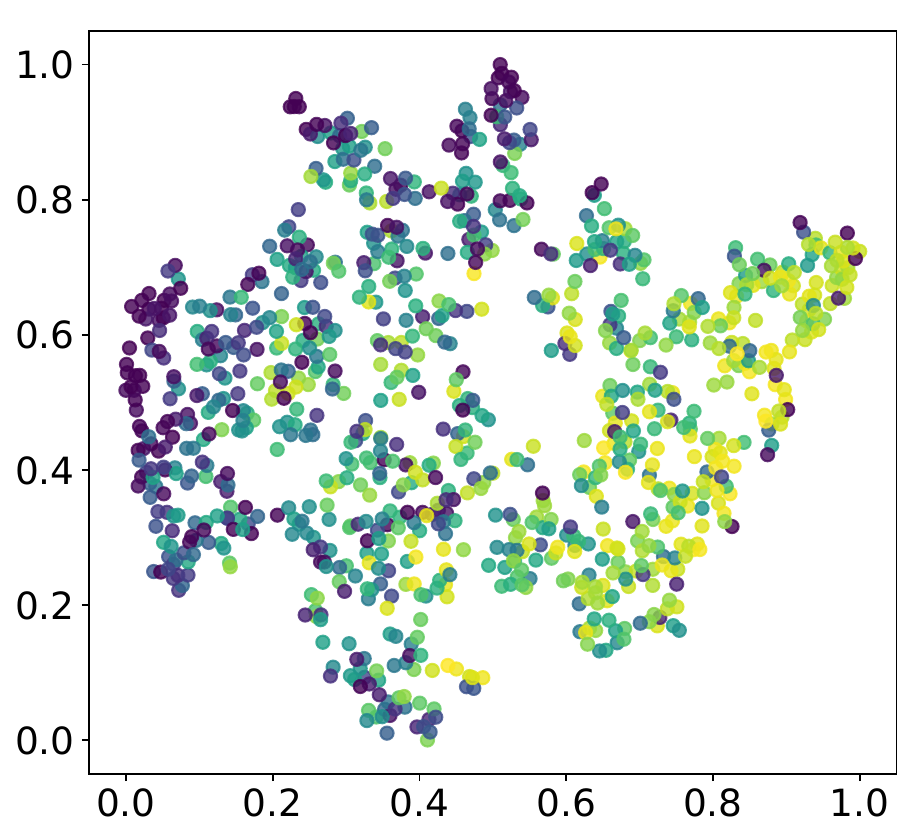}
        \subcaption{dial-turn}
    \end{minipage} \hspace{0pt}
    \begin{minipage}{0.22\textwidth}
        \centering
        \includegraphics[height=80pt]{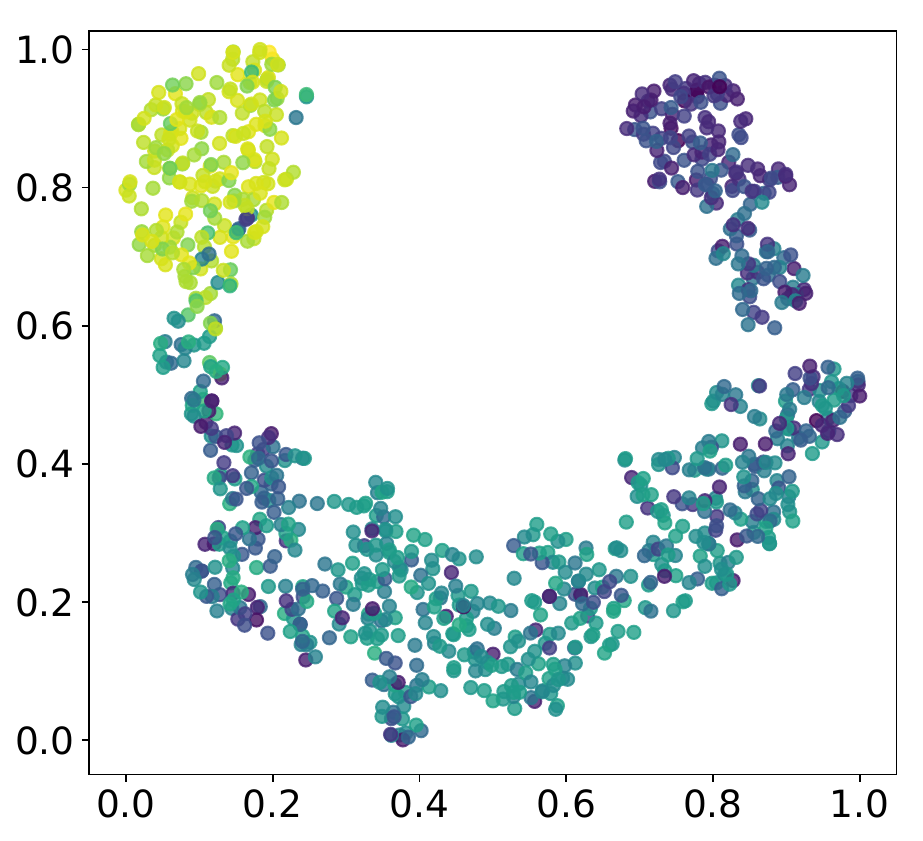}
        \subcaption{drawer-open}
    \end{minipage} \hspace{0pt}
    \begin{minipage}{0.22\textwidth}
        \centering
        \includegraphics[height=80pt]{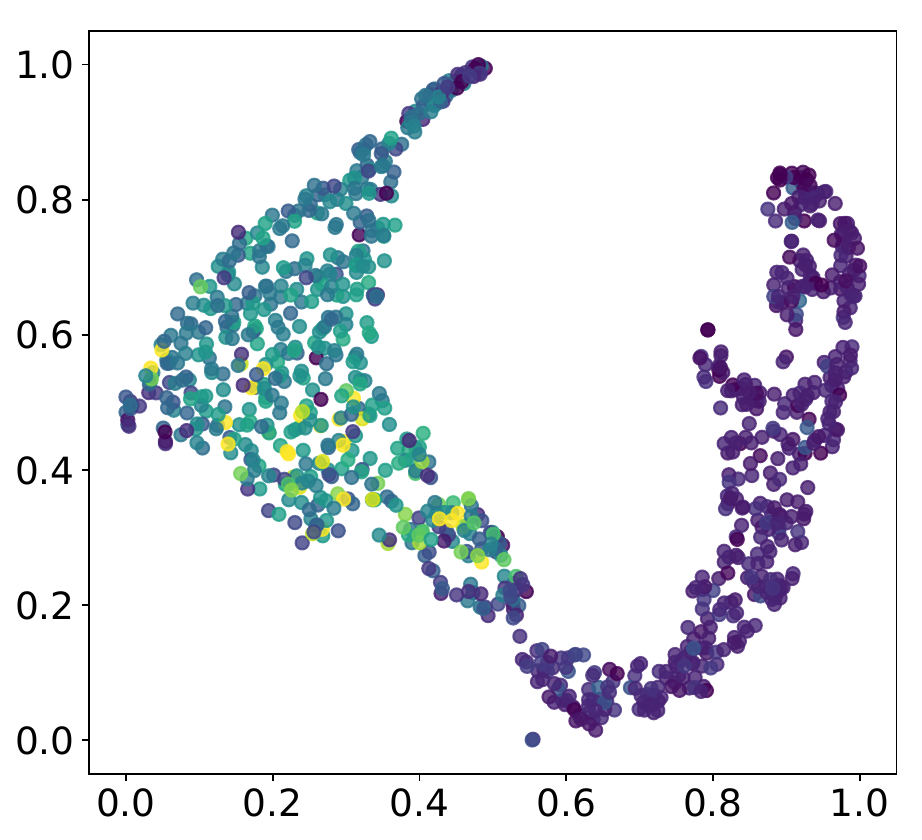}
        \subcaption{hammer}
    \end{minipage} \hspace{0pt}
    \begin{minipage}{0.22\textwidth}
        \centering
        \includegraphics[height=80pt]{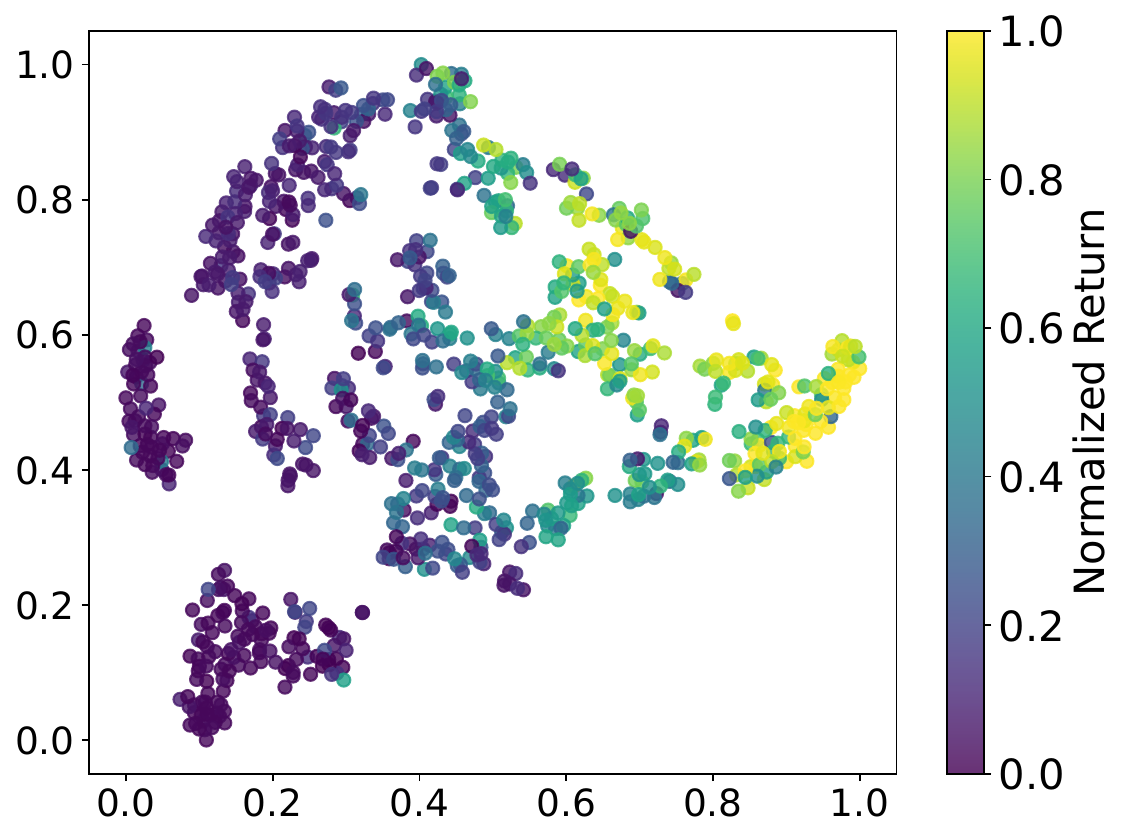}
        \subcaption{walker-walk}
    \end{minipage}
    \vspace{-5pt}
    \caption{Visualizations of the learned embedding spaces under $\epsilon = 0.5$, where segments with high returns (bright color) and low returns (dark color) are clearly separated into distinct clusters, with a smooth transition between their centers.}
    \label{fig:main_embedding}
    \vspace{-10pt}
\end{figure*}

\section{Theoretical Analysis}
\label{sec:theory}

This section establishes the theoretical foundation of CLARIFY's embedding framework and provides insights into the proposed objective. 
We present two key propositions showing that the losses $\mathcal{L}_\text{quad}$ and $\mathcal{L}_\text{amb}$ ensure: 
1) margin separation for distinguishable samples, and 
2) convex separability of preference signals in the embedding space. 
The first property guarantees a meaningful geometric embedding for selecting distinguishable samples, while the second ensures that the embedding space does not collapse and maintains a clear separation between positive and negative samples.

\textbf{Margin guarantees for disentangled representations.} 
Proposition~\ref{prop:prop1} formalizes how $\mathcal{L}_\text{amb}$ enforces geometric separation between trajectories with distinguishable value differences. 
\begin{proposition}[Positive Separation Margin under Optimal Ambiguity Loss]
\label{prop:prop1}
Let $D_p$ be a distribution over labeled trajectory pairs $(\sigma_0,\sigma_1,p)$ with $p \in \{0,1, \texttt{\emph{no\_cop}}\}$. Define 
$P=\{(\sigma_0,\sigma_1)\mid p\in\{0,1\}\}$ and 
$N=\{(\sigma_0,\sigma_1)\mid p=\texttt{\emph{no\_cop}}\}$. 
Let \begin{equation*}
  \phi^* \;=\;\arg\min_{\phi}\,\mathcal{L}_{\mathrm{amb}}(\phi),
  \qquad
  z_i \;=\; f_{\phi^*}(\sigma_i),
\end{equation*}
 and set 
\begin{equation*}
\begin{aligned}
  d^+_{\min}
    &= \inf_{(\sigma_0,\sigma_1)\in P}\ell(z_0,z_1),\\
  d^-_{\max}
    &= \sup_{(\sigma_0,\sigma_1)\in N}\ell(z_0,z_1),
\end{aligned}
\end{equation*}
where $\ell(z_0,z_1)=\|z_0-z_1\|$. If the class $\{f_\phi\}$ is continuous and sufficiently expressive, and $\mathrm{supp}(D_p)$ is compact in $(\sigma_0,\sigma_1)$-space, then there exists $\delta>0$ such that 
\begin{equation*}
  d^+_{\min}\ge d^-_{\max}+\delta>0.
\end{equation*}
\end{proposition}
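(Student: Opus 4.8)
The plan is to combine a compactness-and-continuity argument for attainment with a perturbation-based proof by contradiction for strict separation, and then upgrade the strict inequality into a uniform margin. Throughout I take the minimizer $\phi^*$ as given by the statement and write $z_i=f_{\phi^*}(\sigma_i)$ and $\ell(z_0,z_1)=\|z_0-z_1\|$.

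First I would establish that the extremal distances are actually attained. Since $\mathrm{supp}(D_p)$ is compact in $(\sigma_0,\sigma_1)$-space and $f_{\phi^*}$ is continuous, the map $(\sigma_0,\sigma_1)\mapsto \ell(z_0,z_1)$ is continuous on a compact set; hence, taking closures of $P$ and $N$ if necessary, the infimum $d^+_{\min}$ over $P$ and the supremum $d^-_{\max}$ over $N$ are achieved at witness pairs $(\sigma_0,\sigma_1)\in P$ and $(\sigma_0',\sigma_1')\in N$. I would also record that the norm regularization keeps embeddings near the unit sphere, so distances are bounded and $\mathbb{E}_P[\ell]$ cannot be driven to $+\infty$; this is what makes the $\arg\min$ a genuine minimizer rather than a non-attained infimum.

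The core step is a contradiction. Suppose $d^+_{\min}\le d^-_{\max}$. Then at $\phi^*$ a clearly distinguishable pair sits no farther apart than some ambiguous pair, a configuration $\mathcal{L}_{\mathrm{amb}}$ should penalize, since it decreases in every $P$-distance and increases in every $N$-distance. Using the expressiveness of $\{f_\phi\}$, I would build a competitor $\tilde\phi$ that nudges the witnessing $P$-pair embeddings slightly apart and the witnessing $N$-pair embeddings slightly together while leaving the remaining embeddings essentially unchanged; this strictly lowers $\mathcal{L}_{\mathrm{amb}}$ and contradicts optimality, forcing $d^+_{\min}>d^-_{\max}$. Finally, since both quantities are attained and $d^+_{\min}>d^-_{\max}$, I set $\delta=d^+_{\min}-d^-_{\max}>0$, which gives $d^+_{\min}\ge d^-_{\max}+\delta>0$ (the last inequality because $d^-_{\max}\ge 0$).

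The main obstacle is the localized-perturbation claim. A single encoder maps every trajectory, and a given trajectory may appear in both $P$-pairs and $N$-pairs, so one cannot in general move one pair's embeddings without disturbing others; moreover, if $D_p$ admits a density, an individual extremal pair carries zero measure and may be invisible to the expectation. I would address this by leaning on the \emph{sufficiently expressive} hypothesis to decouple inputs, treating $f_\phi$ as able to fit prescribed target embeddings on the compact support in the spirit of universal approximation, and, where density makes single-pair perturbations invisible, by arguing on a positive-measure neighborhood of the witness pairs rather than the pairs themselves. The subtlest case to rule out is a genuine conflict, where competing $P$- and $N$-constraints on a shared trajectory pin its embedding; I expect this is precisely where the expressiveness together with the norm constraint jointly prevent a collapse that would otherwise force a $P$-pair to lie closer than an $N$-pair.
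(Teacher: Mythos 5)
Your proposal is correct and follows essentially the same route as the paper's proof: a perturbation-based contradiction (nudge an offending $P$-pair apart and an $N$-pair together to strictly decrease $\mathcal{L}_{\mathrm{amb}}$, contradicting optimality of $\phi^*$) combined with compactness/continuity to turn the resulting strict inequality into a uniform margin $\delta>0$. The only difference is cosmetic ordering --- you establish attainment of $d^+_{\min}$ and $d^-_{\max}$ up front rather than after the contradiction --- and your discussion of the localized-perturbation obstacle is, if anything, more candid than the paper's ``richness'' assumption, which simply postulates that such decoupled perturbations exist.
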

\begin{proof}
    Please refer to Appendix \ref{proof:prop1} for detailed proof.
\end{proof}

\vspace{-8pt}

This proposition guarantees that the ambiguity‐loss minimizer $\phi^*$ yields not only a correct ordering of pairwise distances but also enforces a strict margin $\delta>0$:
\begin{equation*}
  \inf_{(\sigma_0,\sigma_1)\in P}\|z_0-z_1\|
  \;\ge\;
  \sup_{(\sigma_0,\sigma_1)\in N}\|z_0-z_1\| + \delta.
\end{equation*}
In effect, CLARIFY’s objective $\mathcal L_{\text{amb}}$ provably carves out a uniform buffer $\delta$ around all distinguishable trajectory pairs, endowing the learned embedding space with a nontrivial geometric margin that enhances robustness and separability.

\textbf{Convex geometry of preference signals.}
The second proposition characterizes how $\mathcal{L}_\text{quad}$ induces convex separability in the embedding space. We show that minimizing this loss directly translates to constructing a robust decision boundary between preferred and non-preferred trajectories:

\begin{proposition}[Convex Separability]
\label{prop:prop2}
    Assume the positive and negative samples are distributed in two convex sets $\mathcal{C}^{+}$ and $\mathcal{C}^{-}$ in the embedding space. Let $\mu^{+}=\mathbb{E}[z^{+}]$ and $\mu^{-}=\mathbb{E}[z^{-}]$ denote the class centroids. If $\mathcal{L}_\text{\emph{quad}}$ is minimized with a margin $\eta>0$, then there exists a hyperplane $\mathcal H$ defined by:
    \begin{equation*}
    \mathcal H = \{ z \in \mathbb{R}^d \mid w^T z + b = 0 \}
    \end{equation*}
    such that for all $z^{+} \in \mathcal{C}^{+}$ and $z^{-} \in \mathcal{C}^{-}$,
    \begin{equation*}
    \tilde{d}(z^{+}, \mathcal H) \geq \eta \quad \text{and} \quad \tilde{d}(z^{-}, \mathcal H) \leq -\eta
    \end{equation*}
    where $\tilde{d}(z, \mathcal H) = (w^{\top}b+z)/||w||$ denotes the signed distance from $z$ to $\mathcal H$.
\end{proposition}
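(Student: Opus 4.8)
The plan is to build the separating hyperplane explicitly from the two class centroids and then verify the two-sided margin by a Cauchy--Schwarz estimate, using convexity to make the bound uniform over the whole sets. First I would unpack what it means for $\mathcal{L}_\text{quad}$ to be minimized with margin $\eta$. The quadrilateral loss rewards the cross-class distances $\ell(z^+,{z^-}^\prime),\ell({z^+}^\prime,z^-)$ while penalizing the within-class distances $\ell(z^+,{z^+}^\prime),\ell(z^-,{z^-}^\prime)$; at a margin-$\eta$ optimum this simultaneously concentrates the positive samples, concentrates the negative samples, and drives the two clusters apart. Concretely, I would extract from the margin condition a within-class radius bound, $\|z^+-\mu^+\|\le R$ and $\|z^--\mu^-\|\le R$ for all samples, together with a centroid-separation bound $\|\mu^+-\mu^-\|\ge 2R+2\eta$.

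Given these, set $w=\mu^+-\mu^-$ and $b=-\tfrac12\,w^\top(\mu^++\mu^-)$, so that $\mathcal H$ is the perpendicular bisector of the segment joining the centroids. A one-line computation gives, for any $z^+\in\mathcal C^+$,
\[
  w^\top z^+ + b \;=\; w^\top(z^+-\mu^+) + \tfrac12\|w\|^2,
\]
whence the signed distance satisfies
\[
  \tilde d(z^+,\mathcal H)
  \;=\; \frac{w^\top(z^+-\mu^+)}{\|w\|} + \frac{\|w\|}{2}
  \;\ge\; -R + \frac{\|\mu^+-\mu^-\|}{2}
  \;\ge\; \eta,
\]
where the first inequality is Cauchy--Schwarz together with the radius bound and the last is the centroid-separation bound. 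The symmetric computation for $z^-\in\mathcal C^-$ yields $\tilde d(z^-,\mathcal H)\le R-\tfrac12\|\mu^+-\mu^-\|\le-\eta$, which is exactly the claimed two-sided margin.

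Convexity enters to lift these sample-level estimates to the entire sets and, as a cleaner alternative, to invoke the hyperplane separation theorem: since $\mathcal C^+$ and $\mathcal C^-$ are convex and the computation above places them in disjoint half-spaces, the standard separation theorem for disjoint compact convex sets guarantees a strictly separating hyperplane whose optimal margin equals half the distance between the closest points of the two sets. I would use this both to confirm existence of $\mathcal H$ and to identify the best achievable $\eta$ with $\tfrac12\,\mathrm{dist}(\mathcal C^+,\mathcal C^-)$.

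The main obstacle is the first step: faithfully translating the loss-level margin $\eta$, stated as a relation among \emph{expected} pairwise Euclidean distances, into the uniform geometric quantities $R$ and $\|\mu^+-\mu^-\|$ that the Cauchy--Schwarz argument needs. This demands either assuming the margin holds pairwise on the support of $D_p$ (so the distance relations are uniform rather than only in expectation) or exploiting convexity to pass from centroid and extreme-point behavior to every interior point. Handling this gap rigorously is where the real work lies; by contrast, the hyperplane construction and the signed-distance calculation are routine.
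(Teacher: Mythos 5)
Your hyperplane is the same one the paper uses --- $w=\mu^+-\mu^-$ and $b=-\tfrac12\bigl(\|\mu^+\|^2-\|\mu^-\|^2\bigr)$, the perpendicular bisector of the segment joining the centroids --- but the way you propose to verify the two-sided margin diverges from the paper, and the step you yourself flag as ``where the real work lies'' is a genuine gap, not a routine detail. You need to extract from ``$\mathcal L_{\text{quad}}$ is minimized with margin $\eta$'' a uniform within-class radius bound $\|z^\pm-\mu^\pm\|\le R$ together with a centroid separation $\|\mu^+-\mu^-\|\ge 2R+2\eta$. Nothing in the loss delivers this: $\mathcal L_{\text{quad}}$ is an \emph{expectation} over pairs of queries, so minimizing it constrains average distances, not the worst-case deviation of an individual sample from its class centroid, and the proposition's hypothesis never says what ``minimized with a margin $\eta$'' means at the level of single samples. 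Without a pointwise (or support-uniform) strengthening of the margin condition, there is no $R$, and your Cauchy--Schwarz estimate has nothing to feed on. Falling back on the hyperplane separation theorem does not rescue the argument either, because convexity of $\mathcal C^+$ and $\mathcal C^-$ alone does not even give disjointness --- disjointness is exactly what the radius and separation bounds were supposed to supply.

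For comparison, the paper takes a different and shorter route that sidesteps your radius bound entirely: it formally sets $\partial\mathcal L_{\text{quad}}/\partial w=0$ and $\partial\mathcal L_{\text{quad}}/\partial b=0$ to ``derive'' $w$ and $b$, evaluates the signed distance only at the two \emph{centroids}, obtaining $\tilde d(\mu^\pm)=\pm\tfrac12\|\mu^+-\mu^-\|$ and thereby \emph{identifying} $\eta=\tfrac12\|\mu^+-\mu^-\|$ rather than treating $\eta$ as a given hypothesis, and then asserts that because $\tilde d$ is affine and the classes are convex, all of $\mathcal C^+$ lies in $\{\tilde d\ge\eta\}$ and all of $\mathcal C^-$ in $\{\tilde d\le-\eta\}$. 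That last inference is not valid in general (an affine function equal to $\eta$ at the centroid of a convex set need not be at least $\eta$ everywhere on the set), so the paper avoids your gap only by introducing one of its own. Your plan is the more honest of the two about where the difficulty sits; to close it you would need either a pointwise margin hypothesis strong enough to yield $R$ and the separation bound, or the outright assumption that $\operatorname{dist}(\mathcal C^+,\mathcal C^-)\ge 2\eta$, after which your bisector-plus-Cauchy--Schwarz computation does go through.
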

\begin{proof}
    Please refer to Appendix \ref{proof:prop2} for detailed proof.
\end{proof}

\vspace{-6pt}

The second proposition characterizes the separability achieved in the embedding space. The centroid separation $\mu^+ - \mu^-$ acts as a global discriminator, while the margin $\eta$ ensures local robustness against ambiguous samples near decision boundaries.
This result has two key implications: 
1) The existence of a separating hyperplane $\mathcal H$ guarantees linear separability of preferences, enabling simple query selection policies (e.g., margin-based sampling) to achieve high labeling efficiency.
2) The margin $\eta$ directly quantifies the ``safety gap'' against ambiguous queries, where any query pair within $2\eta$ distance would be automatically filtered out as unreliable. 
This mathematically substantiates our method's ability to \emph{actively avoid} ambiguous queries during human feedback collection.

To better illustrate the dynamics of the embedding space of the proposed losses, we provide a gradient-based analysis to explain, available in Appendix~\ref{app:gradient}. 
Overall, CLARIFY's framework combines three principles:
1) Margin maximization for query disambiguation, 
2) Convex geometric separation for reliable hyperplane decisions, 
3) Dynamically balanced contrastive gradients for stable embedding learning.
Together, these properties ensure that the learned representation is both \emph{geometrically meaningful} (aligned with trajectory values) and \emph{algorithmically useful} (enabling efficient query selection).

\section{Experiments}

We designed our experiments to answer the following questions:
\textit{Q1:} How does \method compare to other state-of-the-art methods under non-ideal teachers?
\textit{Q2:} Can \method improve label efficiency by query selection?
\textit{Q3:} Can \method learn a meaningful embedding space for trajectory representation?
\textit{Q4:} What is the contribution of each of the proposed techniques in \method?

\subsection{Setups}
\label{subsec:setup}

\textbf{Dataset and tasks. }
Previous offline PbRL studies often use D4RL~\cite{d4rl} for evaluation, but D4RL is shown to be insensitive to reward learning due to the ``survival instinct'' \cite{li2023survival}, where performance can remain high even with wrong rewards \cite{OPRL}.
To address this, we use the offline dataset presented by~\citet{LiRE} with Metaworld~\cite{metaworld} and DMControl~\cite{dmcontrol}, which has been proven to be suitable for reward learning~\cite{LiRE}. 
Specifically, we choose $7$ complex Metaworld tasks: 
box-close, dial-turn, drawer-open, handle-pull-side, hammer, peg-insert-side, sweep-into, 
and $2$ complex DMControl tasks: cheetah-run, walker-walk.
Task details are provided in Appendix \ref{app:tasks}.

\textbf{Baselines. }
We compare \method with several state-of-the-art methods, including Markovian Reward (MR), Preference Transformer (PT) \cite{PT}, OPRL \cite{OPRL}, OPPO \cite{OPPO}, and LiRE \cite{LiRE}.
MR is based on a Markovian reward model using MLP layers, 
serving as the baseline in PT, which uses a Transformer for reward modeling. 
OPRL leverages reward ensembles and selects queries with maximum disagreement. 
OPPO also learns trajectory embeddings but optimizes the policy directly in the embedding space. 
LiRE uses the listwise comparison to augment the feedback. 
Following prior work, we use IQL \cite{IQL} to optimize the policy after reward learning. 
Also, we train IQL with ground truth rewards as a performance upper bound. 
More implementation details are provided in Appendix \ref{app:implement_detail}.

\textbf{Non-ideal teacher design. } 
Following prior works \cite{pebble, OPRL}, we use a scripted teacher for systematic evaluation, which provides preferences between segments based on the sum of ground truth rewards $r_\text{gt}$. 
To better mimic human decision-making uncertainty, we introduce a ``skip'' mechanism.
When the performance difference between two segments $\sigma_0, \sigma_1$ is marginal, that is,
\begin{equation}
    \bigg|
    \begin{aligned}
    % \left|
    \sum_{(s,a)\in\sigma_0} r_\text{gt}(s,a) - \Sigma_{(s,a)\in\sigma_1} r_\text{gt}(s,a)
    % \right|
    \end{aligned}
    \bigg|
    < \epsilon H\cdot r_\text{avg} ,
\end{equation}
the teacher skips the query by assigning $p = \texttt{no\_cop}$. 
Here, $H$ is the segment length, and $r_\text{avg}$ is the average ground truth reward for transitions in offline dataset $D$. 
We refer to $\epsilon\in(0,1)$ as the skip rate. 
This model is similar to the ``threshold'' mechanism in \citet{LiRE}, but differs from the ``skip'' teacher in B-Pref \cite{Bpref}, which skips segments with too small returns.

\begin{figure}[t]
    % \centering
    \includegraphics[width=0.97\linewidth]{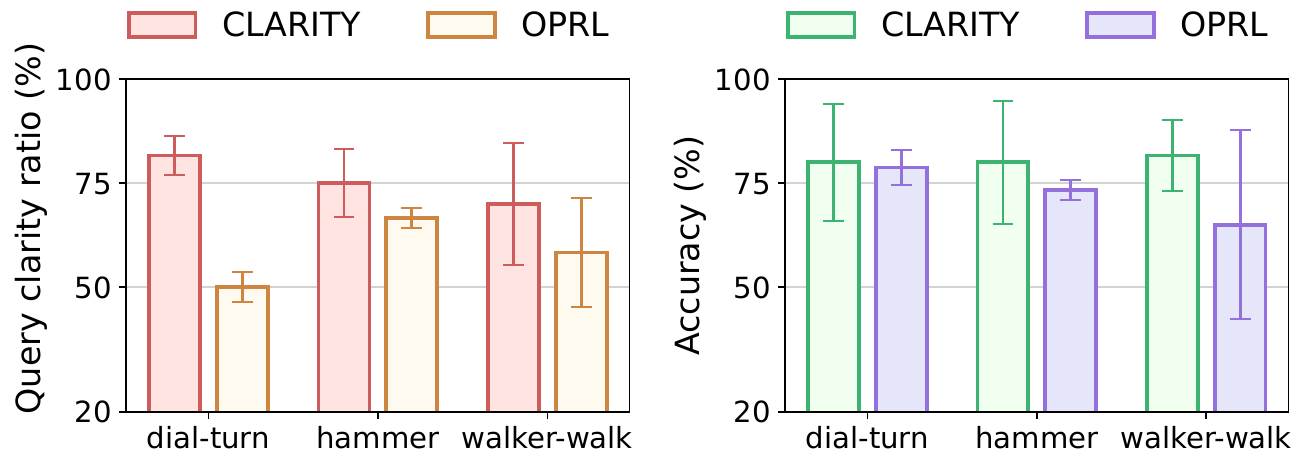}
    \caption{Query clarity ratio and accuracy of human labels for \method and OPRL, with \method-selected queries are more clearly distinguished for humans.
    }
    \label{fig:human_like_pbhim}
    \vspace{-8pt}
\end{figure}

\subsection{Evaluation on the Offline PbRL Benchmark}

\textbf{Benchmark results.}  
We compare \method with baselines on Metaworld and DMControl.  
Table~\ref{tab:main_results} shows that skipping degrades MR's performance, even with PT.  
LiRE improves MR via listwise comparison, while OPRL enhances performance by selecting maximally disagreed queries.  
OPPO is similarly affected by skipping, performing on par with MR.  
In contrast, \method selects clearer queries, improving reward learning and achieving the best results in most tasks.  

\textbf{Enhanced query clarity.}  
To assess \method's ability to select unambiguous queries, we compare the distinguishable query ratio under a non-ideal teacher.  
Table~\ref{tab:skip_ratio} shows \method achieves higher query clarity across tasks.  

We further validate this via human experiments on dial-turn, hammer, and walker-walk.  
Labelers provide $20$ preference labels per run over $3$ seeds, selecting the segment best achieving the task (e.g., upright posture in walker-walk).  
Unclear queries are skipped.  
Appendix~\ref{app:human_exp} details task objectives and prompts.  

We evaluate with:  
\vspace{4pt}
\newline
1) Query clarity: the ratio of queries with human-provided preferences. \newline
2) Accuracy: agreement between human labels and ground truth.  
\vspace{4pt}
\newline
Figure~\ref{fig:human_like_pbhim} confirms \method's superior query clarity and accuracy, validating its effectiveness.

\textbf{Embedding space visualizations. }
We visualize the learned embedding space using t-SNE in Figure~\ref{fig:main_embedding}. Each point represents a segment embedding, colored by its normalized return value. For most tasks, high-performance segments (bright colors) and low-performance segments (dark colors) form distinct clusters with smooth transitions, indicating that the embedding space effectively captures performance differences. The balanced distribution of points further demonstrates the stability and quality of the learned representation, highlighting \method's ability to create a meaningful and coherent embedding space.

\begin{table}[t]
\small
\caption{Performance of \method and MR using different numbers of queries, under skip rate $\epsilon=0.5$. }
\label{tab:sample_efficiency}
\centering
\resizebox{\columnwidth}{!}{
\begin{tabular}{c|rr|rr}
\toprule \xrowht[()]{8pt}
\multirow{2}{*}{\makecell{\# of \\ Queries}} & 
\multicolumn{2}{c|}{\normalsize dial-turn} & 
\multicolumn{2}{c}{\normalsize sweep-into} \\
\cmidrule{2-3}
\cmidrule{4-5}
& \multicolumn{1}{c}{\method}
& \multicolumn{1}{c|}{MR}
& \multicolumn{1}{c}{\method}
& \multicolumn{1}{c}{MR} \\
\midrule
$100$ & 
59.50 {\tiny $\pm$ 4.67} & 49.50 {\tiny $\pm$ 10.16} & 54.00 {\tiny $\pm$ 2.16} & 49.67 {\tiny $\pm$ 4.03} \\
$500$ & 
77.25 {\tiny $\pm$ 6.87} & 50.50 {\tiny $\pm$ 5.98} & 68.67 {\tiny $\pm$ 1.70} & 56.67 {\tiny $\pm$ 4.03} \\
$1000$ & 
77.50 {\tiny $\pm$ 3.01} & 57.33 {\tiny $\pm$ 5.02} & 68.00 {\tiny $\pm$ 3.19} & 61.00 {\tiny $\pm$ 7.52} \\
$2000$ & 
77.80 {\tiny $\pm$ 6.10} & 59.00 {\tiny $\pm$ 5.72} & 68.75 {\tiny $\pm$ 1.48} & 63.25 {\tiny $\pm$ 6.02} \\
\bottomrule
\end{tabular}
}
\vspace{-8pt}
\end{table}
\begin{table}[t]
\small
\caption{Ratios of clearly-distinguished queries under the non-ideal teacher with $\epsilon=0.5$, for \method and baselines. }
\label{tab:skip_ratio}
\vspace{-5pt}
\centering
\resizebox{\columnwidth}{!}{
\begin{tabular}{c|cccc}
\toprule \xrowht[()]{8pt}
 & 
dial-turn & hammer & sweep-into & walker-walk \\
\midrule
\method & 
\textbf{76.33\%} & \textbf{62.67\%} & \textbf{68.67\%} & \textbf{46.86\%} \\
MR & 
46.95\% & 50.33\% & 36.67\% & 36.28\% \\
OPRL & 
31.67\% & 12.67\% & 25.60\% & 15.64\% \\
PT & 
43.90\% & 49.33\% & 31.67\% & 37.90\% \\
\bottomrule
\end{tabular}
}
\vspace{-5pt}
\end{table}
\begin{figure}[ht]
    % \centering
    \hspace{10pt}
    \includegraphics[width=0.84\linewidth]{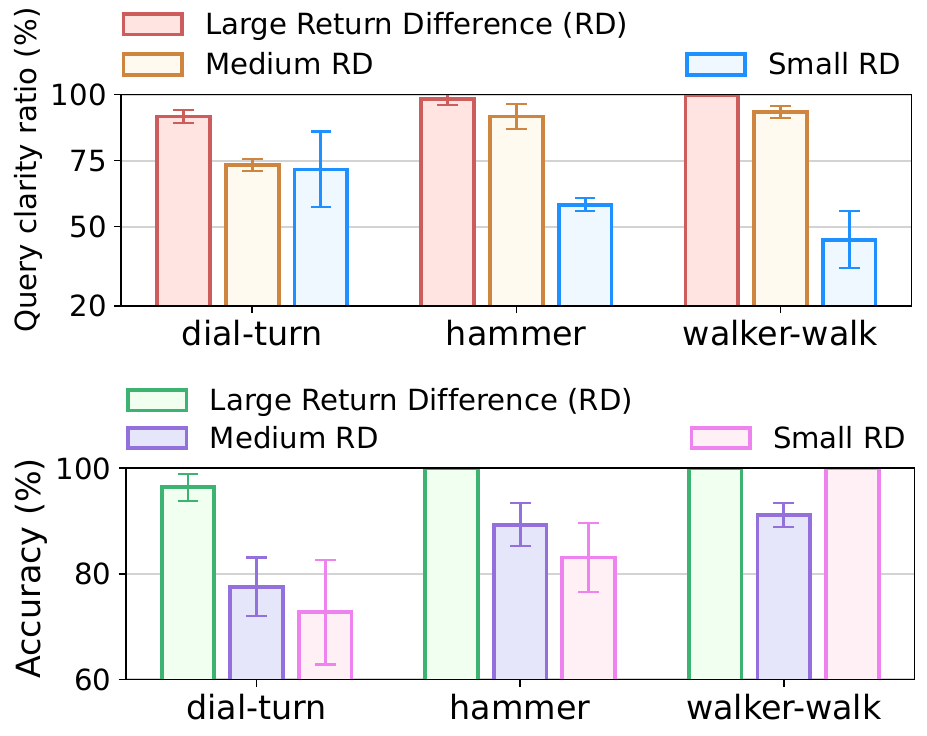}
    \caption{
    Query clarity ratio and accuracy of human labels for queries with varying return differences (RD). 
    For dial-turn and hammer, RD values are $300$, $100$, and $10$; for walker-walk, RD are $30$, $10$, and $1$. 
    % As RD decreases, both clarity ratio and accuracy increase, supporting our non-ideal teacher design.
    }
    \label{fig:human_get_confused}
    \vspace{-2pt}
\end{figure}

\subsection{Human Experiments}

\textbf{Validation of the non-ideal teacher. }
To validate the appropriateness of our non-ideal teacher (Section \ref{subsec:setup}), we conduct a human labeling experiment, where labelers provide preferences between segment pairs with varying return differences.
The results in Figure~\ref{fig:human_get_confused} show that as the return difference increases, both the query clarity ratio and accuracy improve. 
This suggests that when the return difference is small, humans struggle to distinguish between segments, aligning with our assumption that small return differences lead to ambiguous queries.

\textbf{Human evaluation. }
To evaluate \method's performance with real human preferences, we conduct experiments comparing \method with OPRL on the walker-walk task across $3$ random seeds. Human labelers provide $100$ feedback samples per run, with preference batch size $M = 20$.
As shown in Table \ref{tab:real_human_exp}, \method outperforms OPRL in policy performance, suggesting that our reward models have higher quality. Additionally, queries selected by \method have higher clarity ratios and accuracy in human labeling, indicating that our approach improves the preference labeling process by selecting more clearly distinguished queries.
For more details on the human experiments, please refer to Appendix~\ref{app:human_exp}. 
We believe these results indicate \method's potential in real-world applications, especially those involving human feedback, such as LLM alignment.

\begin{table}[t]
\small
\caption{Performance of \method and $\rho_\text{clr}$-based query selection method, under skip rate $\epsilon = 0.5$. }
\label{tab:reject_sampling}
\vspace{-5pt}
\centering
\begin{tabular}{c|rr}
\toprule
\makecell{Query Selection} & 
\multicolumn{1}{c}{dial-turn} &
\multicolumn{1}{c}{sweep-into} \\
\midrule
\method & 
\textbf{77.50} {\tiny $\pm$ 3.01} & \textbf{68.00} {\tiny $\pm$ 3.19} \\
$\rho_\text{clr}$-based & 
61.33 {\tiny $\pm$ 5.31} & 48.60 {\tiny $\pm$ 10.74} \\
\bottomrule
\end{tabular}
\vspace{-10pt}
\end{table}

\begin{table}[t]
\small
\caption{Performance of \method and OPRL on walker-walk task, under real human labelers. }
\label{tab:real_human_exp}
\centering
\begin{tabular}{c|rr}
\toprule
\makecell{} & 
\multicolumn{1}{c}{\method} &
\multicolumn{1}{c}{OPRL} \\
\midrule
Episodic Returns & 
\textbf{420.75} {\tiny $\pm$ 52.02 } & 265.91 {\tiny $\pm$ 33.57 } \\
Query Clarity Ratio (\%) & 
63.33 {\tiny $\pm$ 8.50 } & 53.33 {\tiny $\pm$ 6.24 } \\
Accuracy (\%) & 
\textbf{87.08} {\tiny $\pm$ 9.15 } & 66.67 {\tiny $\pm$ 4.71 } \\
\bottomrule
\end{tabular}
\vspace{-10pt}
\end{table}
\begin{table}[t]
\small
\caption{Performance of \method with and without optimizing $\mathcal L_\text{amb}$ and $\mathcal L_\text{quad}$, under skip rate $\epsilon = 0.5$. }
\label{tab:component_analysis}
\vspace{-2pt}
\centering
\begin{tabular}{cc|rr}
\toprule
$\mathcal L_\text{amb}$ & $\mathcal L_\text{quad}$ & 
\multicolumn{1}{c}{dial-turn} &
\multicolumn{1}{c}{sweep-into} \\
\midrule
\xmark & \xmark & 
63.20 {\tiny $\pm$ 4.79} & 40.00 {\tiny $\pm$ 11.29 } \\
\cmark & \xmark & 
69.00 {\tiny $\pm$ 11.20} & 52.80 {\tiny $\pm$ 17.01} \\
\midrule
\xmark & \cmark & 
71.25 {\tiny $\pm$ 8.81} & 62.20 {\tiny $\pm$ 4.92} \\
\cmark & \cmark & 
77.50 {\tiny $\pm$ 3.01} & 68.00 {\tiny $\pm$ 3.19} \\
\bottomrule
\end{tabular}
\end{table}

\subsection{Ablation Study}

\textbf{Component analysis.} 
To assess the impact of each contrastive loss in \method, we incrementally apply the ambiguity loss $\mathcal{L}_\text{amb}$ and the quadrilateral loss $\mathcal{L}_\text{quad}$.
As shown in Table \ref{tab:component_analysis}, without either loss, performance is similar to OPRL.
Using only $\mathcal{L}_\text{amb}$ yields unstable results due to overfitting early in training.
When only $\mathcal{L}_\text{quad}$ is applied, performance improves but with slower convergence.
When both losses are used, the best results are achieved, showing that their combination is crucial to the method’s success.

\textbf{Ablation on the query selection. }
We compare two query selection methods: 
\method‘s rejection sampling and a density-based approach that selects the highest-density queries $\rho_\text{clr}(d)$, aiming for the most clearly distinguished queries.
As shown in Table \ref{tab:reject_sampling}, the density-based method performs poorly, likely due to selecting overly similar queries, reducing diversity in the queries.
In contrast, \method selects a more diverse set of unambiguous queries, yielding better performance.

\textbf{Enhanced query efficiency. }
We compare the performance of \method and MR using different numbers of queries.
As shown in Table \ref{tab:sample_efficiency}, \method outperforms MR consistently, even if only $100$ queries are provided. The result demonstrates \method's ability to make better use of the limited feedback budget.

\textbf{Ablation on hyperparameters. }
As Figure \ref{fig:hyperparameter_embedding} visualizes, when varying the values of $\lambda_\text{dist}$, $\lambda_\text{quad}$, and $\lambda_\text{norm}$, the quality of embedding space remains largely unaffected. 
This demonstrates the robustness of \method to hyperparameter changes, as small adjustments do not significantly alter the quality of learned embeddings.

\section{Conclusion}

This paper presents \method, a novel approach that enhances PbRL by addressing the \textbf{ambiguous query} issue. 
Leveraging contrastive learning, \method learns trajectory embeddings with preference information and employs reject sampling to select more clearly distinguished queries, improving label efficiency. 
Experiments show \method outperforms existing methods in policy performance and labeling efficiency, generating high-quality embeddings that boost human feedback accuracy in real-world applications, making PbRL more practical.

\section*{Acknowledgments}
This work is supported by the NSFC (No. 62125304, 62192751, and 62073182), the Beijing Natural Science Foundation (L233005), the BNRist project (BNR2024TD03003), the 111 International Collaboration Project (B25027)
and the Strategic Priority Research Program of the Chinese Academy of Sciences (No.XDA27040200).
The author would like to express gratitude to Yao Luan for the constructive suggestions for this paper.

\section*{Impact Statement}
We believe that \method has the potential to significantly improve the alignment of reinforcement learning agents with human preferences, enabling more precise and adaptable AI systems. Such advancements could have broad societal implications, particularly in domains like healthcare, education, and autonomous systems, where understanding and responding to human intent is crucial. By enhancing the ability of AI to align with diverse human values and preferences, \method could promote greater trust in AI technologies, facilitate their adoption in high-stakes environments, and ensure that AI systems operate ethically and responsibly. These developments could contribute to the responsible advancement of AI technologies, improving their safety, fairness, and applicability in real-world scenarios.

\bibliography{reference}
\bibliographystyle{icml2025}

\newpage
\appendix
\onecolumn

\section{Additional Proofs and Theoretical Analysis}
\label{app:theory}

\subsection{Proof of Proposition~\ref{prop:prop1}: Margin Lower Bound}
\label{proof:prop1}
\begin{theorem}[Proposition~\ref{prop:prop1}, restated]
Let $D_p$ be a distribution over trajectory‐pairs $(\sigma_0,\sigma_1,p)$ where $p\in\{0,1,\texttt{\emph{no\_cop}}\}$.  Define
\begin{equation}
  P \;=\;\bigl\{(\sigma_0,\sigma_1)\mid p\in\{0,1\}\bigr\},
  \qquad
  N \;=\;\bigl\{(\sigma_0,\sigma_1)\mid p=\texttt{\emph{no\_cop}}\bigr\}.
\end{equation}
Let
$\displaystyle \phi^*=\arg\min_{\phi}\mathcal{L}_{\mathrm{amb}}(\phi).$
 Assume:
\begin{enumerate}
  \item \emph{(Compactness)} The support of $D_p$ in $(\sigma_0,\sigma_1)$‐space is compact, and each $f_\phi$ is continuous.
  \item \emph{(Richness)} For any two pairs $(z_0^+,z_1^+)$ with $(\sigma_0^+,\sigma_1^+)\in P$ and $(z_0^-,z_1^-)$ with $(\sigma_0^-,\sigma_1^-)\in N$, there exists an infinitesimal perturbation of $\phi^*$ that simultaneously changes $\ell(z_0^+,z_1^+)$ and $\ell(z_0^-,z_1^-)$ without affect other embeddings.
\end{enumerate}
Write $z_i=f_{\phi^*}(\sigma_i)$ and define
\begin{equation}
  d^+_{\min}=\inf_{(\sigma_0,\sigma_1)\in P}\ell(z_0,z_1),
  \quad
  d^-_{\max}=\sup_{(\sigma_0,\sigma_1)\in N}\ell(z_0,z_1).
\end{equation}
Then there exists a constant $\delta>0$ such that
\begin{equation}
  d^+_{\min}\;\ge\;d^-_{\max}+\delta\;>\;0.
\end{equation}
\end{theorem}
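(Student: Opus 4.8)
The plan is to argue by contradiction from the optimality of $\phi^*$, using compactness to make the relevant extremal distances attainable and the richness assumption to exhibit a loss-decreasing perturbation whenever the target ordering fails. Note first that once $\phi^*$ is fixed, $d^+_{\min}$ and $d^-_{\max}$ are fixed real numbers, so the claim $d^+_{\min} \ge d^-_{\max} + \delta$ for some $\delta > 0$ is equivalent to the strict inequality $d^+_{\min} > d^-_{\max}$, after which one simply sets $\delta := d^+_{\min} - d^-_{\max}$. The entire task therefore reduces to establishing strict separation between the smallest clear-pair distance and the largest ambiguous-pair distance.

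I would begin by extracting the consequences of compactness. The map $(\sigma_0,\sigma_1) \mapsto \|f_{\phi^*}(\sigma_0) - f_{\phi^*}(\sigma_1)\|$ is continuous, being a composition of the continuous encoder and the norm. Restricting it to $P$ and $N$, taken as closed (hence compact) subsets of the compact support of $D_p$, the extreme value theorem yields concrete attaining pairs $(\sigma_0^+,\sigma_1^+)\in P$ and $(\sigma_0^-,\sigma_1^-)\in N$ with $\ell(z_0^+,z_1^+) = d^+_{\min}$ and $\ell(z_0^-,z_1^-) = d^-_{\max}$, both finite. Having genuine attaining pairs, rather than mere infimizing sequences, is what lets the next step be phrased as a concrete local move.

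The core step is the contradiction. Suppose $d^+_{\min} \le d^-_{\max}$. Since $\mathcal{L}_{\mathrm{amb}} = -\mathbb{E}_{P}[\ell] + \mathbb{E}_{N}[\ell]$ weights clear-pair distances negatively and ambiguous-pair distances positively, increasing $\ell(z_0^+,z_1^+)$ (pushing the closest clear pair apart) and decreasing $\ell(z_0^-,z_1^-)$ (pulling the farthest ambiguous pair together) would each strictly lower the loss. The richness assumption provides exactly such an infinitesimal perturbation $\phi^* \mapsto \phi^* + \varepsilon\Delta$ acting on these two distances while fixing the other embeddings, so for small $\varepsilon > 0$ the loss strictly decreases, contradicting $\phi^* = \arg\min_\phi \mathcal{L}_{\mathrm{amb}}$. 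Hence $d^+_{\min} > d^-_{\max}$, and with $\delta := d^+_{\min} - d^-_{\max} > 0$ we get $d^+_{\min} = d^-_{\max} + \delta$; since $d^-_{\max} \ge 0$ is a norm, also $d^+_{\min} \ge \delta > 0$.

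The main obstacle is making this perturbation step honest rather than merely formal. Under a continuous $D_p$ any single pair has measure zero, so perturbing only the two extremal pairs need not change an expectation; the richness assumption is precisely the device that closes this gap, and it should be read as guaranteeing a perturbation that actually moves the loss, most cleanly in the empirical/finite-sample regime where every sampled pair carries positive weight, or via a continuity argument that the perturbation shifts a positive-measure neighborhood of each extremal pair. I would also flag two supporting subtleties that the statement leaves implicit: existence of the minimizer $\phi^*$ requires the embeddings to stay in a bounded set (the unit-norm regularizer $\mathcal{L}_{\mathrm{norm}}$ in the implementation), otherwise the $-\mathbb{E}_{P}[\ell]$ term is unbounded below; and this same boundedness can make the norm constraint active at $\phi^*$, in mild tension with the freedom that richness postulates, so richness is in effect the strongest hypothesis and carries most of the argument.
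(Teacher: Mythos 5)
Your proposal follows essentially the same route as the paper's own proof: assume $d^+_{\min}\le d^-_{\max}$, use compactness to obtain attaining extremal pairs, invoke the richness assumption to perturb $\phi^*$ so that the closest clear pair is pushed apart and the farthest ambiguous pair is pulled together, derive a strict decrease of $\mathcal{L}_{\mathrm{amb}}$ contradicting global optimality, and then set $\delta=d^+_{\min}-d^-_{\max}>0$. The caveats you flag (the measure-zero issue under a continuous $D_p$, which the paper implicitly sidesteps by writing the perturbed expectations with finite-sample weights $1/N^{\pm}$, and the boundedness needed for the minimizer to exist) are genuine limitations of the paper's argument as well, not gaps specific to your version.
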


\begin{proof}
We first show that
\begin{equation}
  d^+_{\min}\;>\;d^-_{\max}.
\end{equation}
Set 
\begin{equation}
  A(\phi)
  :=\mathbb{E}_{(\sigma_0, \sigma_1)\in P}\!\bigl[\ell\bigl(f_\phi(\sigma_0),f_\phi(\sigma_1)\bigr)\bigr],
\quad
  B(\phi)
  :=\mathbb{E}_{(\sigma_0, \sigma_1)\in D}\!\bigl[\ell\bigl(f_\phi(\sigma_0),f_\phi(\sigma_1)\bigr)\bigr],
\end{equation}
so that $\mathcal{L}_{\rm amb}(\phi)=-A(\phi)+B(\phi)$.  Since $\phi^*$ is a global minimizer,
\begin{equation}
  \mathcal{L}_{\rm amb}(\phi^*)\;\le\;\mathcal{L}_{\rm amb}(\phi^*+\Delta\phi)
  \quad\Longrightarrow\quad
  -A(\phi^*)+B(\phi^*) 
  \;\le\;
  -A(\phi^*+\Delta\phi)+B(\phi^*+\Delta\phi)
\end{equation}
for every infinitesimal admissible $\Delta\phi$.  

Suppose for contradiction that 
\begin{equation}
  d^+_{\min}\;\leq\;d^-_{\max}.
\end{equation}
Then there exist one distinguished pair $(\sigma^+_0,\sigma^+_1)$ and one ambiguous pair $(\sigma^-_0,\sigma^-_1)$ such that
\begin{equation}
  \ell\bigl(z^+_0,z^+_1\bigr)
  \;\leq\;
  \ell\bigl(z^-_0,z^-_1\bigr).
\end{equation}
Because $\ell$ is strictly increasing in the Euclidean norm, we can choose a tiny $\varepsilon>0$ and perturb
\begin{equation}
  z^+_0\mapsto z^+_0+\tfrac{\varepsilon}{2}\,u^+,\quad
  z^+_1\mapsto z^+_1-\tfrac{\varepsilon}{2}\,u^+,
  \quad
  z^-_0\mapsto z^-_0-\tfrac{\varepsilon}{2}\,u^-,\quad
  z^-_1\mapsto z^-_1+\tfrac{\varepsilon}{2}\,u^-,
\end{equation}
where $u^+$ is the unit‐vector from $z^+_1$ to $z^+_0$ (and similarly $u^-$), so as to increase $\ell(z^+_0,z^+_1)$ by some $\delta'>0$ and decrease $\ell(z^-_0,z^-_1)$ by some $\delta''>0$.  By the richness assumption, this arises from some $\phi^*+\Delta\phi$.  

Under this perturbation,
\begin{equation}
  A(\phi^*+\Delta\phi)\;=\;A(\phi^*)+\tfrac{1}{N^+}\,\delta'\;+\;\text{(higher‐order terms)}, 
\quad
  B(\phi^*+\Delta\phi)\;=\;B(\phi^*)-\tfrac{1}{N^-}\,\delta''\;+\;\cdots,
\end{equation}
so
\begin{equation}
  \mathcal{L}_{\rm amb}(\phi^*+\Delta\phi)
  -\mathcal{L}_{\rm amb}(\phi^*)
  \;=\;
  \underbrace{-\tfrac{1}{N^+}\,\delta'}_{<0}
  \;+\; \underbrace{\tfrac{1}{N^-}\,\delta''}_{>0}
  \;<\;0
\end{equation}
for sufficiently small $\varepsilon$.  This contradicts the global optimality of $\phi^*$.  Therefore
\begin{equation}
  d^+_{\min}\;>\;d^-_{\max},
\end{equation}
as claimed.

Then we show the strict gap via compactness. Since the support of $D_p$ is compact and $f_{\phi^*}$, $\ell$ are continuous, the image sets
\begin{equation}
  S^+ = \{\ell(z_0,z_1)\mid(\sigma_0,\sigma_1)\in P\},
  \quad
  S^- = \{\ell(z_0,z_1)\mid(\sigma_0,\sigma_1)\in N\}
\end{equation}
are compact and, by the separation result, disjoint in $\mathbb{R}$.  Two disjoint compact subsets of $\mathbb{R}$ have a strictly positive gap:
\begin{equation}
  \delta \;=\;\inf S^+ \;-\;\sup S^-
  \;>\;0.
\end{equation}
Therefore
\begin{equation}
  d^+_{\min}=\inf S^+\;\ge\;\sup S^-+\delta
  =d^-_{\max}+\delta,
\end{equation}
completing the proof.
\end{proof}

\subsection{Proof of Proposition~\ref{prop:prop2}: Convex Separability}
\label{proof:prop2}

\begin{proposition}[Margin Guarantee for $\mathcal L_{\rm quad}$]
Let $C^+$ and $C^-\subset\mathbb R^d$ be convex, and write
\begin{equation}
\mu^+=\mathbb E[z^+],\quad \mu^-=\mathbb E[z^-],
\qquad
w=\mu^+-\mu^-,
\quad
b=-\tfrac12\bigl(\|\mu^+\|^2-\|\mu^-\|^2\bigr).
\end{equation}
Define the signed distance
$\widetilde d(z)=(w^\top z+b)/\|w\|$.
If $\phi$ globally minimizes
\begin{equation}
\mathcal L_{\rm quad}
=-\mathbb E\Bigl[\;
\ell(z^+,z_-')+\ell(z_+',z^-)
-\ell(z^+,z^-)-\ell(z_+',z_-')\Bigr],
\end{equation}
then for every $z^+\in C^+$ and $z^-\in C^-$,
\begin{equation}
\widetilde d(z^+)\;\ge\;\eta,
\quad
\widetilde d(z^-)\;\le\;-\,\eta,
\end{equation}
where $\displaystyle\eta=\tfrac12\,\|\mu^+ - \mu^-\|$.
\end{proposition}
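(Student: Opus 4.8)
The plan is to first fix the geometry of the candidate hyperplane $\mathcal{H}$ and then use the convexity of $C^+$ and $C^-$ to promote a statement about the two centroids to a statement about every point. I would work with the loss exactly as stated in this proposition, since it differs slightly from the main-text form.

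First I would verify that $\mathcal{H}$ is precisely the perpendicular bisector of the segment joining $\mu^+$ and $\mu^-$, and read off the margin from the centroids. With $w=\mu^+-\mu^-$ and $b=-\tfrac12(\|\mu^+\|^2-\|\mu^-\|^2)$, a direct expansion gives
\[ w^\top\mu^+ + b = \tfrac12\|\mu^+-\mu^-\|^2, \qquad w^\top\mu^- + b = -\tfrac12\|\mu^+-\mu^-\|^2. \]
Dividing by $\|w\|=\|\mu^+-\mu^-\|$ yields $\widetilde d(\mu^+)=\tfrac12\|\mu^+-\mu^-\|=\eta$ and $\widetilde d(\mu^-)=-\eta$. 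This anchors the claimed margin at the centroids and identifies $\eta$ as half the centroid separation.

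Second, I would translate the global minimization of $\mathcal L_{\rm quad}$ into a separation statement about the direction $w$. Using the L2 metric together with the norm constraint $\|z\|\approx 1$ enforced by $\mathcal L_{\rm norm}$, each squared distance $\ell(z_a,z_b)^2$ reduces (up to an additive constant) to $-2z_a^\top z_b$, so the loss is governed by inner products. Minimizing $\mathcal L_{\rm quad}$ then drives the positive embeddings to concentrate around $\mu^+$ and the negatives around $\mu^-$ while maximizing cross-class dissimilarity, i.e.\ it pushes the two clusters apart along $w$. Third, I would invoke convexity: since $C^+$ and $C^-$ are convex and, by the previous step, separated with a positive gap, the separating/supporting hyperplane theorem applies, and I would argue that the centroid bisector $\mathcal H$ realizes this separation. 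Passing from the extreme points of the convex bodies — which determine the worst-case signed distance — to the full sets would then give $\widetilde d(z^+)\ge\eta$ and $\widetilde d(z^-)\le-\eta$ for all points.

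The hard part, and the honest obstacle, is the gap between an \emph{average} guarantee and a \emph{uniform} one. Because $\widetilde d$ is affine, we have $\mathbb E[\widetilde d(z^+)]=\widetilde d(\mu^+)=\eta$ \emph{exactly}, so the pointwise bound $\widetilde d(z^+)\ge\eta$ cannot follow from the centroid computation alone — taken literally it would force $C^+$ to be flat in the $w$-direction. Bridging this genuinely requires the margin-$\eta$ minimization hypothesis to do quantitative work: the optimal quadrilateral configuration must enforce that every cross-class pair is separated by at least the within-class scale plus $\eta$, not merely on average. I would therefore make this step rigorous by reformulating the conclusion in terms of the margin parameter $\eta$ supplied by the optimality condition (rather than the centroid distance), and by controlling the worst-case point of each convex body through its extreme-point characterization, so that the separation established for the cluster centers transports uniformly to the full sets $C^+$ and $C^-$.
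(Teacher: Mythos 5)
Your route is essentially the paper's: expand $w^\top z + b = \tfrac12\bigl(\|z-\mu^-\|^2 - \|z-\mu^+\|^2\bigr)$, deduce $\widetilde d(\mu^+)=\eta$ and $\widetilde d(\mu^-)=-\eta$, and then try to promote this centroid identity to a uniform bound over $C^+$ and $C^-$ via convexity. The obstacle you flag in your final paragraph is real, and you have located exactly the step at which the paper's own argument breaks: after the centroid computation, the paper simply asserts that since $\widetilde d$ is affine and $C^+$, $C^-$ are convex, ``the entire set $C^+$ must lie on or beyond the level set $\{\widetilde d=\eta\}$.'' That implication is false. Because $\widetilde d$ is affine, $\mathbb E[\widetilde d(z^+)]=\widetilde d(\mu^+)=\eta$ exactly, so the pointwise bound $\widetilde d(z^+)\ge\eta$ can hold only if the distribution of $z^+$ is supported on the hyperplane $\{\widetilde d=\eta\}$ itself, i.e., $C^+$ is flat in the $w$-direction --- precisely your observation. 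Convexity offers no help: the infimum of an affine function over a convex body is attained at an extreme point and is in general strictly below its value at the centroid.

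So the gap is not one you failed to bridge; it is present, unacknowledged, in the proposition's proof as written. (The paper's preceding step is also suspect: it differentiates $\mathcal L_{\rm quad}$ ``with respect to $w$ and $b$,'' but $w$ and $b$ are not parameters of the loss --- they are defined post hoc from the centroids --- so those stationarity conditions are not consequences of optimality of $\phi$.) Your instinct that the optimality of $\mathcal L_{\rm quad}$ must be made to do quantitative, worst-case work is the right diagnosis, but it cannot succeed as stated: the loss is a difference of \emph{expectations} of distances, so its global minimization constrains averages, not suprema and infima. Without an additional hypothesis (e.g., the within-class spread along $w$ is bounded by $\eta$, or the supports are contained in balls around the centroids), the uniform margin claim does not follow; the honest conclusions available from this argument are the centroid-level identities $\widetilde d(\mu^\pm)=\pm\eta$ and an on-average separation.
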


\begin{proof}
Writing out the expectation and differentiating under the integral sign, we can find
\begin{equation}
\frac{\partial \mathcal L_{\rm quad}}{\partial w}
= 2\,\bigl(w - (\mu^+ - \mu^-)\bigr)
\;=\;0,
\qquad
\frac{\partial \mathcal L_{\rm quad}}{\partial b}
= 2\,\bigl(b + \tfrac12(\|\mu^+\|^2-\|\mu^-\|^2)\bigr)
\;=\;0.
\end{equation}
Hence, at any global minimum, 
\begin{equation}
w=\mu^+ - \mu^-,
\quad
b=-\tfrac12\bigl(\|\mu^+\|^2-\|\mu^-\|^2\bigr).
\end{equation}

\medskip
\noindent
A direct expansion shows
\begin{equation}
w^\top z + b
= \tfrac12\bigl(\|z-\mu^-\|^2 - \|z-\mu^+\|^2\bigr),
\end{equation}
so
\begin{equation}
\widetilde d(\mu^+)
=\frac{\|\mu^+ - \mu^-\|^2}{2\|\mu^+ - \mu^-\|}
= \frac12\,\|\mu^+ - \mu^-\|
= \eta,
\quad
\widetilde d(\mu^-)= -\eta.
\end{equation}
Since $\widetilde d(z)$ is an affine function of $z$ and $C^+$, $C^-$ are convex, the entire set $C^+$ must lie on or beyond the level set $\{\widetilde d=\eta\}$, and $C^-$ on or beyond $\{\widetilde d=-\eta\}$.  Equivalently,
\begin{equation}
\widetilde d(z^+)\ge\eta,
\quad
\widetilde d(z^-)\le -\eta,
\end{equation}
for every $z^+\in C^+$, $z^-\in C^-$, as claimed.
\end{proof}

\subsection{Gradient Dynamics of $\mathcal{L}_\text{quad}$ and $\mathcal{L}_\text{amb}$}
\label{app:gradient}
The gradient expressions reveal two key mechanisms that govern the dynamics of the embedding:

\begin{itemize}
    \item \textbf{Parametric Alignment ($\mathcal{L}_\text{quad}$):} 
    The gradient $\nabla{z^+}\mathcal{L}_\text{quad} = 2({z^+}' - {z^-}')$ induces a \emph{relational drift}, which does not move $z^+$ towards fixed coordinates but instead navigates based on the \emph{relative positions} of its augmented counterpart ${z^+}'$ and the negative sample ${z^-}'$. This leads to a self-supervised clustering effect, where trajectories with similar preference labels naturally group together in tight neighborhoods.
    
    \item \textbf{Discriminant Scaling ($\mathcal{L}_\text{amb}$):} 
    The distance-maximization gradient for distinguishable pairs, $-2(z_0 - z_1)$, enforces \emph{repulsive dynamics}, similar to the Coulomb forces between same-charge particles. In contrast, the gradient for indistinguishable pairs, $2(z_0 - z_1)$, acts like spring forces, pulling uncertain samples toward neutral regions. This dynamic equilibrium prevents the embedding from collapsing while maintaining geometric coherence with the preference signals.
\end{itemize}

Together, these dynamics resemble a \emph{potential field} in physics: 
high-preference trajectories act as attractors, low-preference ones as repulsors, and ambiguous regions as saddle points. This resulting geometry directly facilitates our query selection objective.

\begin{figure*}[ht]
    \centering
    \begin{minipage}{0.22\textwidth}
        \centering
        \includegraphics[height=95pt]{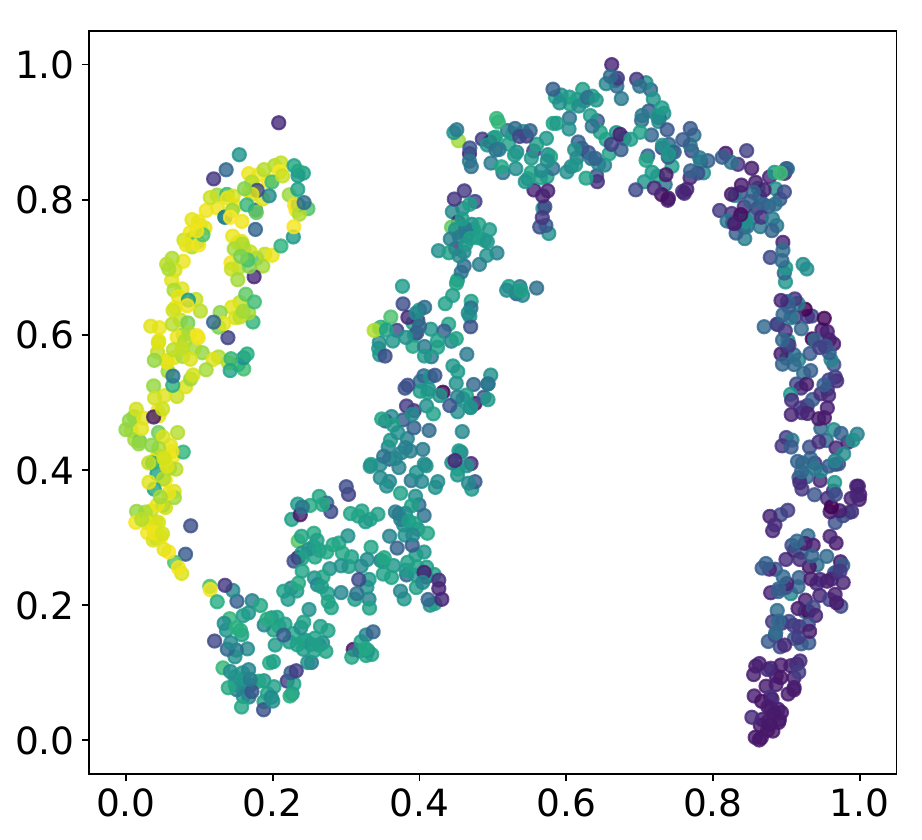}
        \subcaption{$(0.1, 5)$}
    \end{minipage} \hspace{0pt}
    \begin{minipage}{0.22\textwidth}
        \centering
        \includegraphics[height=95pt]{fig/emb_drawer-open.pdf}
        \subcaption{$(0.1, 1)$}
    \end{minipage} \hspace{0pt}
    \begin{minipage}{0.22\textwidth}
        \centering
        \includegraphics[height=95pt]{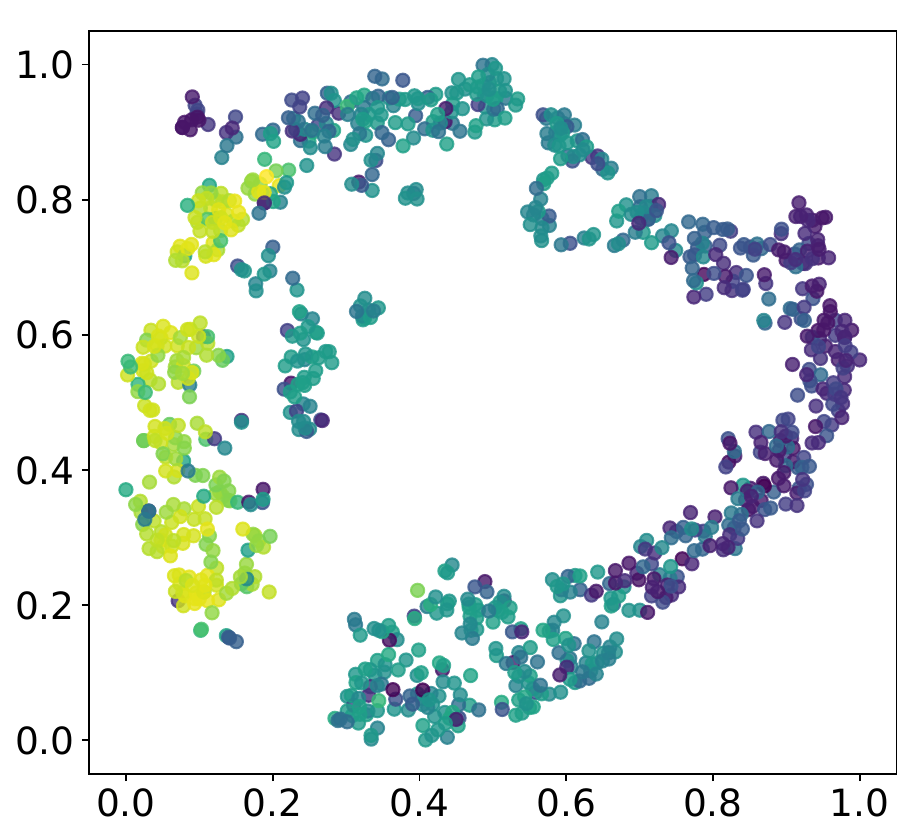}
        \subcaption{$(1, 1)$}
    \end{minipage} \hspace{0pt}
    \begin{minipage}{0.22\textwidth}
        \centering
        \includegraphics[height=95pt]{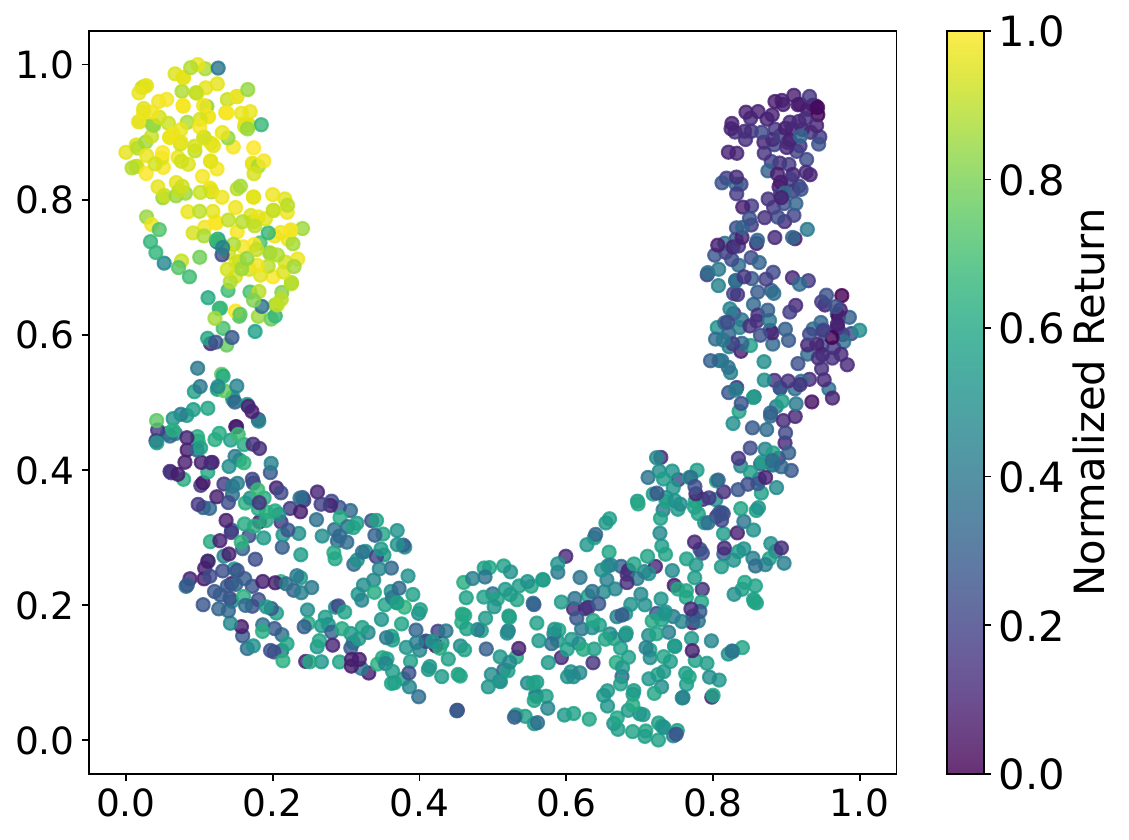}
        \subcaption{$(5, 1)$}
    \end{minipage}
    \vspace{-5pt}
    \caption{Embedding visualizations of the drawer-open task with $\epsilon = 0.5$, under different hyperparameters settings. 
    The subfigure captions of are the values of $(\lambda_\text{amb}, \lambda_\text{quad})$, and $(\lambda_\text{amb}, \lambda_\text{quad}) = (0.1, 1)$ is the setting we use in experiments. }
    \label{fig:hyperparameter_embedding}
\end{figure*}

\section{Additional Experimental Results}
\label{app:more_results}

\textbf{Ablation on hyperparameters. }
As Figure \ref{fig:hyperparameter_embedding} visualizes, when varying the values of $\lambda_\text{amb}$, $\lambda_\text{quad}$, and $\lambda_\text{norm}$, the quality of embedding space remains largely unaffected. 
This demonstrates the robustness of \method to hyperparameter changes, as small adjustments do not significantly alter the quality of learned embeddings.

\begin{algorithm}[t]
\caption{The proposed offline PbRL method using \method embedding}
\label{alg:overall}
\begin{algorithmic}[1]
\REQUIRE
Offline dataset $D$ with no reward signal,
total feedback number $N_\text{total}$, 
query batch size $M$, 
number of discretization $n_\text{bin}$,
number of gradient steps $n_\text{init}, n_\text{emb}, n_\text{reward}$,
coefficients $\lambda_\text{amb}$, $\lambda_\text{quad}$, $\lambda_\text{norm}$
\newline \textit{// Reward learning}
\STATE Initialize preference dataset $D_p=\emptyset$
\STATE Randomly select $M$ queries $\{(\sigma_0,\sigma_1,p)\}_M$, update $D_p \leftarrow \{(\sigma_0,\sigma_1,p)\}_M \cup D_p$
\STATE Optimize the encoder $f_\phi$ based on Eq. \ref{eq:total_loss} by $n_\text{init}$ gradient steps
\STATE Optimize the reward model $\hat r_\theta$ based on Eq. \ref{eq:CE_loss} by $n_\text{reward}$ gradient steps
\WHILE{total feedback $< N_\text{total}$}
    \STATE Select $M$ queries $\{(\sigma_0,\sigma_1,p)\}_M$ based on the query selection method in Section \ref{subsec:query_selection}, update $D_p \leftarrow \{(\sigma_0,\sigma_1,p)\}_M \cup D_p$
    \STATE Optimize the encoder $f_\phi$ based on Eq. \ref{eq:total_loss} by $n_\text{emb}$ gradient steps
    \STATE Optimize the reward model $\hat r_\psi$ based on Eq. \ref{eq:CE_loss} by $n_\text{reward}$ gradient steps
\ENDWHILE
\newline \textit{// Policy learning}
\STATE Label the offline dataset $D$ using reward model $\hat r_\theta$
\STATE Train the IQL policy $\pi_\theta$ using the relabeled dataset $D$
\end{algorithmic}
\end{algorithm}

\section{Experimental Details}
\label{app:experiment_detail}

\subsection{Tasks}
\label{app:tasks}
The robotic manipulation tasks from Metaworld \cite{metaworld} and the locomotion tasks from DMControl \cite{dmcontrol} used in our experiments are shown in Figure \ref{fig:task_intro}. The descriptions of these tasks are as follows.

\noindent \textbf{Metaworld Tasks:}  
\begin{enumerate}  
    \item Box-close: An agent controls a robotic arm to close a box lid from an open position.  
    \item Dial-turn: An agent controls a robotic arm to rotate a dial to a target angle.  
    \item Drawer-open: An agent controls a robotic arm to grasp and pull open a drawer.  
    \item Handle-pull-side: An agent controls a robotic arm to pull a handle sideways to a target position.  
    \item Hammer: An agent controls a robotic arm to pick up a hammer and use it to drive a nail into a surface.  
    \item Peg-insert-side: An agent controls a robotic arm to insert a peg into a hole from the side.  
    \item Sweep-into: An agent controls a robotic arm to sweep an object into a target area.  
\end{enumerate}  

\noindent \textbf{DMControl Tasks:}  
\begin{enumerate}  
    \item Cheetah-run: A planar biped is trained to control its body and run on the ground.  
    \item Walker-walk: A planar walker is trained to control its body and walk on the ground.  
\end{enumerate}

We use the offline dataset from LiRE \cite{LiRE} for our experiments. 
LiRE allows for control over dataset quality by adjusting the size of the replay buffer (replay buffer size = data quality value * 100000), which provides different levels of dataset quality. 
The dataset quality in our experiments differs from the one used in LiRE, as detailed in Table \ref{tab:app_dataset_quality}.
The number of total preference feedback used in our experiments is detailed in Table \ref{tab:app_feedback_num}.

\begin{figure*}[ht]
\centering
\vspace{10pt}
\begin{minipage}{0.16\textwidth}
    \centering
    \includegraphics[width=\textwidth]{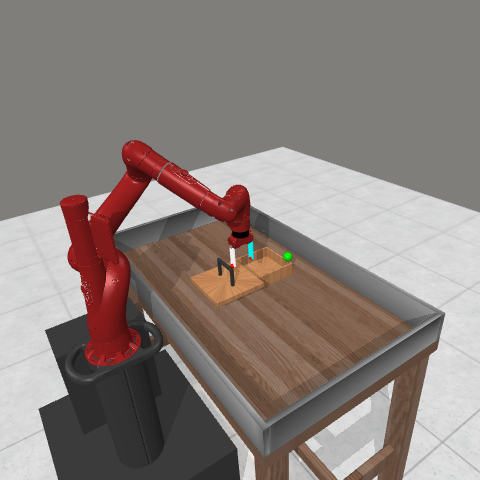}
    \subcaption{Box-close}
\end{minipage} \hspace{10pt}
\begin{minipage}{0.16\textwidth}
    \centering
    \includegraphics[width=\textwidth]{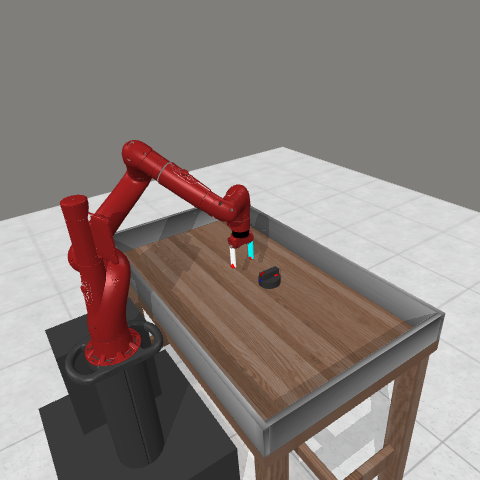}
    \subcaption{Dial-turn}
\end{minipage} \hspace{10pt}
\begin{minipage}{0.16\textwidth}
    \centering
    \includegraphics[width=\textwidth]{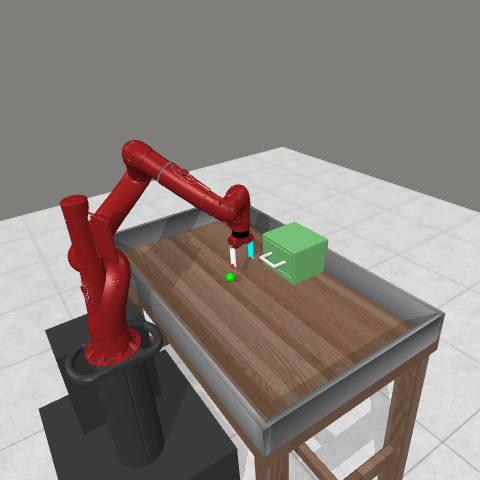}
    \subcaption{Drawer-open}
\end{minipage} \hspace{10pt}
\begin{minipage}{0.16\textwidth}
    \centering
    \includegraphics[width=\textwidth]{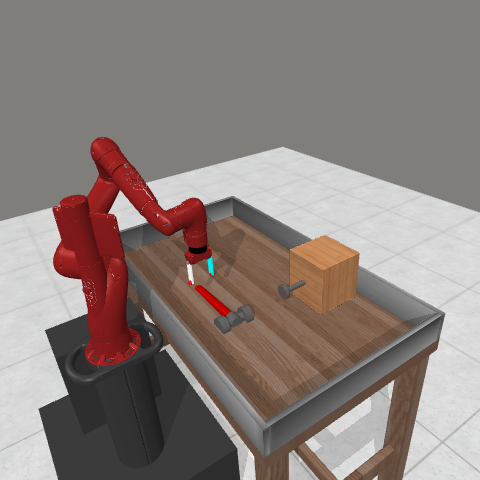}
    \subcaption{Hammer}
\end{minipage} \hspace{10pt}
\begin{minipage}{0.16\textwidth}
    \centering
    \includegraphics[width=\textwidth]{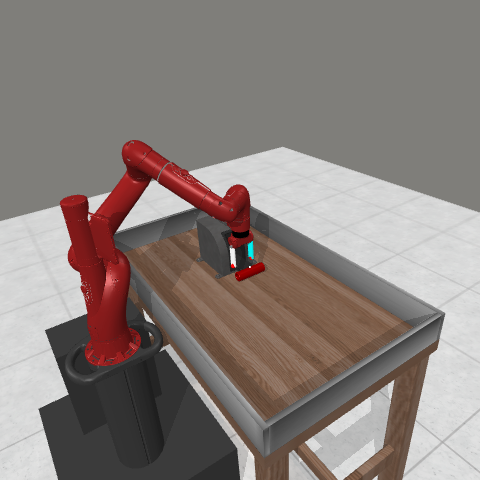}
    \subcaption{Handle-pull-side}
\end{minipage}
\\
\vspace{12pt}
\begin{minipage}{0.16\textwidth}
    \centering
    \includegraphics[width=\textwidth]{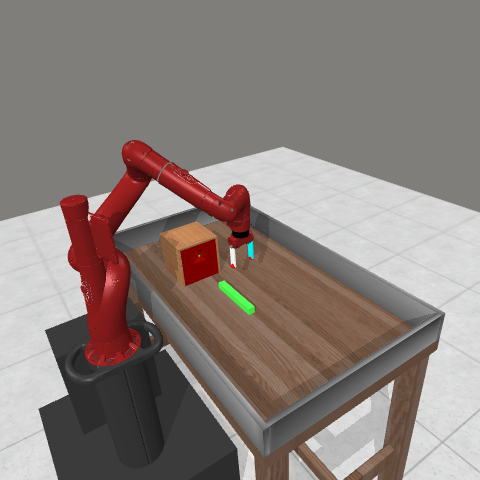}
    \subcaption{peg-insert-side}
\end{minipage} \hspace{10pt}
\begin{minipage}{0.16\textwidth}
    \centering
    \includegraphics[width=\textwidth]{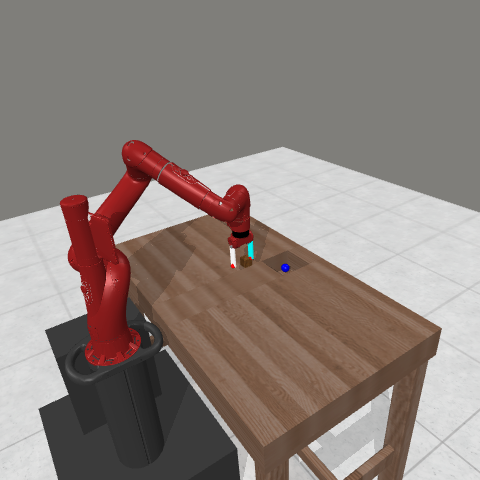}
    \subcaption{Sweep-into}
\end{minipage} \hspace{10pt}
\begin{minipage}{0.16\textwidth}
    \centering
    \includegraphics[width=\textwidth]{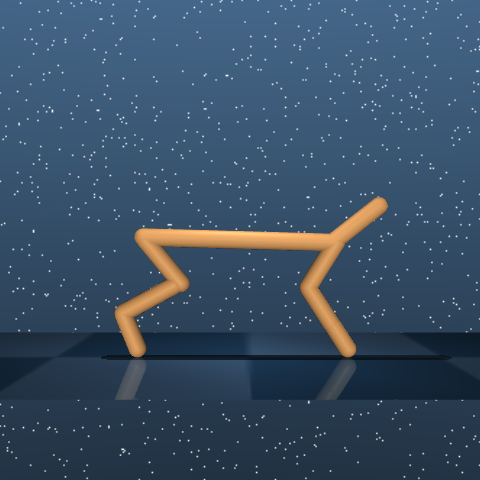}
    \subcaption{Cheetah-run}
\end{minipage} \hspace{10pt}
\begin{minipage}{0.16\textwidth}
    \centering
    \includegraphics[width=\textwidth]{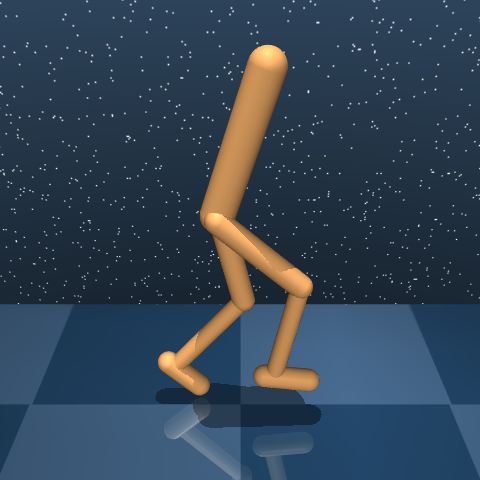}
    \subcaption{Walker-walk}
\end{minipage} \hspace{10pt}
\begin{minipage}{0.16\textwidth}
\end{minipage}

\caption{Nine tasks from Metaworld (a-g) and DMControl (h, i).}
\label{fig:task_intro}
\end{figure*}
\begin{table}[ht]
\vspace{-1pt}
\caption{
The dataset quality in our experiments.
Specifically, to prevent all the methods from failing under our teacher with the skip mechanism, we enhanced the dataset quality for several tasks.
}
\centering
\begin{tabular}{rcc}
\toprule

Task & Value of LiRE & Value of \method \\

\midrule

Box-close & 8.0 & 9.0 \\
Dial-turn & 3.5 & 3.5 \\
Drawer-open & 1.0 & 1.0 \\
Hammer & 5.0 & 5.0 \\
Handle-pull-side & 1.0 & 2.5 \\
Peg-insert-side & 5.0 & 5.0 \\
Sweep-into & 1.0 & 1.5 \\
Cheetah-run & / & 6.0 \\
Walker-walk & 1.0 & 1.0 \\

\bottomrule
\end{tabular}
\label{tab:app_dataset_quality}
\end{table}

\begin{table}[ht]
\vspace{-1pt}
\caption{Total feedback number $N_\text{total}$.}
\centering
\begin{tabular}{cl}
\toprule

Task & Value \\

\midrule

Metaworld tasks & 1000 \\
Cheetah-run & 500 \\
Walker-walk & 200 \\

\bottomrule
\end{tabular}
\label{tab:app_feedback_num}
\end{table}

\subsection{Implementation Details}
\label{app:implement_detail}

In our experiments, MR, PT, OPRL, LiRE, and \method are all two-step PbRL methods. In these methods, the reward model is first trained, followed by offline RL using the trained reward model. 
The reward models used in \method, MR, and OPRL share the same structure, as outlined in Table \ref{tab:app_hyperparameters1}. 
We use the trained reward model to estimate the reward for every $(s,a)$ pair in the offline RL dataset, and we apply min-max normalization to the reward values so that the minimum and maximum values are mapped to 0 and 1, respectively.
We use IQL as the default offline RL algorithm. The total number of gradient descent steps in offline RL is 200,000, and we evaluate the success rate or episodic return for 20 episodes every 5,000 steps. 
For all baselines and our method, we run 6 different seeds. We report the average success rate or episodic return of the last five trained policies. The hyperparameters for offline policy learning are provided in Table \ref{tab:app_hyperparameters1}.

We follow the official implementations of MR and PT\footnote{\href{https://github.com/csmile-1006/PreferenceTransformer}{https://github.com/csmile-1006/PreferenceTransformer}}, OPPO\footnote{\href{https://github.com/bkkgbkjb/OPPO}{https://github.com/bkkgbkjb/OPPO}}, and LiRE\footnote{\href{https://github.com/chwoong/LiRE}{https://github.com/chwoong/LiRE}}. 
Note that LiRE treats queries with a too-small reward difference as equally preferred ($p=0.5$), while in our setting, these queries are labeled as $\texttt{no\_cop}$ and excluded from reward learning.

For the BDT implementation in \method, we follow the implementation of HIM \cite{HIM}\footnote{\href{https://github.com/frt03/generalized_dt}{https://github.com/frt03/generalized\_dt}}, using the BERT architecture as the encoder and the GPT architecture as the decoder.

The code repository of our method is:
\begin{center}
{{
\texttt{\href{https://github.com/MoonOutCloudBack/CLARIFY_PbRL}{https://github.com/MoonOutCloudBack/CLARIFY\_PbRL}}
}}
\end{center}

The hyperparameters for both the baselines and our method are listed in Table \ref{tab:app_hyperparameters2}.

\subsection{Details of the Intuitive Example}
\label{app:intuitive_detail}

To validate the effectiveness of proposed $\mathcal L_\text{quad}$, we generated 1,000 data points with values uniformly distributed between $0$ and $1$. Each data point’s embedding was initialized as a 2-dimensional vector sampled from a standard normal distribution. 
To simulate the issue of ambiguous queries, we defined queries with value differences smaller than 0.3 as ambiguous. We then optimized these embeddings using the quadrilateral loss. 
To ensure stability during training, we imposed a penalty to constrain the L2 norm of the embedding vectors to 1. 
The learning rate was set to 0.1, and the hyperparameters $\lambda_\text{quad}$ and $\lambda_\text{norm}$ were set to 1 and 0.1, respectively.

\begin{table}[ht]
\vspace{-1pt}
\caption{Hyperparameters of reward learning and policy learning.}
\centering
\begin{tabular}{cll}
\toprule

& Hyperparameter & Value \\

\midrule

\multirow{11}{*}{Reward model}
& Optimizer & Adam \\
& Learning rate & 3e-4 \\
& Segment length $H$ & 50 \\
& Batch size  & 128 \\
& Hidden layer dim & 256 \\
& Hidden layers & 3 \\
& Activation function & ReLU \\
& Final activation & Tanh \\
& \# of updates $n_\text{reward}$ & 50 \\
& \# of ensembles & 3 \\
& Reward from the ensemble models & Average \\
& Query batch size $M$ & 50 \\

\midrule

\multirow{9}{*}{IQL}
& Optimizer & Adam \\
& Critic, Actor, Value hidden dim    & 256        \\
& Critic, Actor, Value hidden layers & 2          \\
& Critic, Actor, Value activation function & ReLU \\
& Critic, Actor, Value learning rate & 0.5 \\
& Mini-batch size & 256 \\
& Discount factor & 0.99 \\
& $\beta$ & 3.0 \\
& $\tau$ & 0.7 \\

\bottomrule
\end{tabular}
\label{tab:app_hyperparameters1}
\end{table}

\begin{table}[ht]
\vspace{-1pt}
\caption{Hyperparameters of baselines and \method.}
\centering
\begin{tabular}{cll}
\toprule

& Hyperparameter & Value \\

\midrule

\multirow{2}{*}{OPRL}
& Initial preference labels &  $M=50$    \\
& Query selection & Maximizing disagreement \\  

\midrule

\multirow{5}{*}{PT}
& Optimizer & AdamW \\                
& \# of layers & 1 \\
& \# of attention heads & 4 \\
& Embedding dimension & 256 \\
& Dropout rate & 0.1 \\

\midrule

\multirow{15}{*}{OPPO}
& Optimizer & AdamW  \\
& Learning rate & 1e-4 \\
& $z^*$ learning rate & 1e-3 \\
& Number of layers & 4 \\
& Number of attention heads & 4 \\
& Activation function & ReLU \\
& Triplet loss margin & 1 \\
& Batch size & 256 \\
& Context length & 50 \\
& Embedding dimension & 16 \\
& Dropout & 0.1 \\
& Grad norm clip & 0.25 \\
& Weight decay & 1e-4 \\
& $\alpha$ & 0.5 \\
& $\beta$ & 0.1 \\

\midrule

\multirow{2}{*}{LiRE}
& Reward model & Linear \\
& RLT feedback limit $Q$ & 100 \\

\midrule

\multirow{14}{*}{\method}
& Embedding dimension & 16 \\
& Activation function & ReLU \\
& $\lambda_\text{amb}$ & 0.1 \\
& $\lambda_\text{quad}$ & 1 \\
& $\lambda_\text{norm}$ & 0.1 \\
& Batch size & 256 \\
& Context length & 50 \\
& Dropout & 0.1 \\
& Number of layers & 4 \\
& Number of attention heads & 4 \\
& Grad norm clip & 0.25 \\
& Weight decay & 1e-4 \\
& $n_\text{init}$ & 20000 \\
& $n_\text{emb}$ & 2000 \\

\bottomrule
\end{tabular}
\label{tab:app_hyperparameters2}
\end{table}

\section{Human Experiments}
\label{app:human_exp}

\paragraph{Preference collection.} 
We collect feedback from human labelers (the authors) familiar with the tasks. 
Specifically, they watched video renderings of each segment and selected the one that better achieved the objective. Each trajectory segment was 1.5 seconds long, consisting of 50 timesteps. 
For Figure \ref{fig:human_get_confused}, labelers provided labels for 20 queries for each reward difference across 3 random seeds. 
For Figure \ref{fig:human_like_pbhim}, we ran 3 random seeds for each method, with labelers providing 20 preference labels for each run. 
For Table \ref{tab:real_human_exp}, we ran 3 random seeds for each method, with labelers providing 100 preference labels for each run, and the preference batch size $M=20$.

\paragraph{Prompts given to human labelers.}
The prompts below describe the task objectives and guide preference judgments.

\textbf{Metaworld dial-turn task. }

Task Purpose:

In this task, you will be comparing two segments of a robotic arm trying to turn a dial. Your goal is to evaluate which segment performs better in achieving the task's objectives.

Instructions:
\begin{itemize}
\item Step 1: First, choose the segment where the robot's arm reaches the dial more accurately (the reach component).
\item Step 2: If the reach performance is the same in both segments, then choose the one where the robot's gripper is closed more appropriately (the gripper closed component).
\item Step 3: If both reach and gripper closure are equal, choose the segment that has the robot's arm placed closer to the target position (the in-place component).
\end{itemize}

\textbf{Metaworld hammer task. }

Task Purpose:

In this task, you will be comparing two segments where a robotic arm is hammering a nail. The aim is to evaluate which segment results in a better execution of the hammering process.

Instructions:
\begin{itemize}
\item Step 1: First, choose the segment where the hammerhead is in a better position and the nail is properly hit (the in-place component).
\item Step 2: If the hammerhead positioning is similar in both segments, choose the one where the robot is better holding the hammer and the nail (the grab component).
\item Step 3: If both the hammerhead position and grasping are the same, select the segment where the orientation of the hammer is more suitable (the quaternion component).
\end{itemize}

\textbf{DMControl walker-walk task. }

Task Purpose:

In this task, you will compare two segments where a bipedal robot is attempting to walk. Your goal is to determine which segment shows better walking performance.

Instructions:
\begin{itemize}
\item Step 1: First, choose the segment where the robot stands more stably (the standing reward).
\item Step 2: If both segments have the same stability, choose the one where the robot moves faster or more smoothly (the move reward).
\item Step 3: If both standing and moving are comparable, select the segment where the robot maintains a better upright posture (the upright reward).
\end{itemize}

\section{Introduction to Bi-directional Decision Transformer}
\label{app:HIM}

\paragraph{Introduction to Hindsight Information Matching (HIM). }
Hindsight Information Matching (HIM)~\cite{HIM} aims to improve offline reinforcement learning by aligning the state distribution of learned policies with expert demonstrations. 
Instead of directly imitating actions, HIM minimizes the discrepancy between the marginal state distributions of the learned and expert policies, ensuring that the agent visits states similar to those in expert trajectories. This approach is particularly effective for multi-task and imitation learning settings, where aligning state distributions across different tasks can enhance generalization.

\paragraph{The main idea of BDT. }
BDT is designed for offline one-shot imitation learning, also known as offline multi-task imitation learning. In this setting, the agent must generalize to unseen tasks after observing only a few expert demonstrations. 
Standard Decision Transformer (DT) faces challenges in such scenarios due to its reliance on autoregressive action prediction and task-specific fine-tuning. 
In contrast, BDT improves generalization by utilizing bidirectional context, enabling it to infer useful patterns from limited data. 
This makes BDT particularly effective for imitation learning tasks where the agent needs to efficiently adapt to new environments with just a small number of demonstrations.

BDT extends the Decision Transformer (DT) framework by incorporating bidirectional context, which improves sequence modeling in offline reinforcement learning. 
Unlike standard DT, which predicts actions autoregressively using causal attention, BDT adds an anti-causal transformer that processes the trajectory in reverse order. This anti-causal component enables BDT to use both past and future information, making it highly effective for state-marginal matching and imitation learning.

\paragraph{The architecture of BDT. }
In this paper, we utilize the encoder-decoder framework of BDT to learn the trajectory representation.
The architecture of BDT consists of two transformer networks: a causal transformer that models forward dependencies in the trajectory, and an anti-causal transformer that captures backward dependencies. 
This bidirectional structure allows BDT to extract richer temporal patterns, improving its ability to learn from diverse offline datasets.

\end{document}